\bfseries\contentslabel{2em}}%
\footnotesize\contentslabel{3em}}%
\numberwithin{equation}{section}
\def\eqref#1{equation~\ref{#1}}
\def\1{\bm{1}}
\def\rvb{{\mathbf{b}}}
\def\rvm{{\mathbf{m}}}
\def\rvv{{\mathbf{v}}}
\def\rvx{{\mathbf{x}}}
\def\rmA{{\mathbf{A}}}
\def\rmD{{\mathbf{D}}}
\def\rmF{{\mathbf{F}}}
\def\rmH{{\mathbf{H}}}
\def\rmI{{\mathbf{I}}}
\def\rmM{{\mathbf{M}}}
\def\rmQ{{\mathbf{Q}}}
\def\rmU{{\mathbf{U}}}
\def\rmV{{\mathbf{V}}}
\def\rmW{{\mathbf{W}}}
\def\rmX{{\mathbf{X}}}
\DeclareMathAlphabet{\mathsfit}{\encodingdefault}{\sfdefault}{m}{sl}
\SetMathAlphabet{\mathsfit}{bold}{\encodingdefault}{\sfdefault}{bx}{n}
\newlength{\maxmin}
\definecolor{TiffanyBlue}{HTML}{94D2BD}
\definecolor{Gamboge}{HTML}{EE9B00}
\definecolor{Rust}{HTML}{BB3E03}
\definecolor{DarkCyan}{HTML}{0A9396}
\definecolor{IndigoDye}{HTML}{003A52}
\definecolor{BlueViolet}{HTML}{8338EC}
\definecolor{Rufous}{HTML}{AE2012}
\definecolor{MidnightGreen}{HTML}{005F73}
\definecolor{FernGreen}{HTML}{588157}
\newcommand{\eigAone}{{\textcolor{MidnightGreen}{\lambda^A_1}}\xspace}
\newcommand{\eigAn}{{\textcolor{Rufous}{\lambda^A_n}}\xspace}
\newcommand{\eigVecAone}{{\textcolor{MidnightGreen}{\rvv^A_{1}}}\xspace}
\newcommand{\eigVecAn}{{\textcolor{Rufous}{\boldsymbol{1}}}\xspace}
\newcommand{\eigVecAnAbs}{{\textcolor{Rufous}{\rvv^A_n}}\xspace}
\newcommand\minput[1]{%
  \input{#1}%
  \ifhmode\ifnum\lastnodetype=11 \unskip\fi\fi}
\definecolor{norange}{RGB}{249, 146, 0}
\definecolor{nred}{RGB}{234, 47, 32}
\definecolor{nblue}{RGB}{0, 100, 255}
\definecolor{npurple}{RGB}{146, 26, 192}
\Crefname{section}{Sec.}{Secs.}
\Crefname{equation}{Eq.}{Eqs.}
\Crefname{figure}{Fig.}{Figs.}
\Crefname{tabular}{Tab.}{Tabs.}
\newtheorem{theorem}{Theorem}
\newtheorem{lemma}{Lemma}
\newtheorem{proposition}{Proposition}
\newtheorem{assumption}{Assumption}
\newtheorem{corollary}{Corollary}
\newtheorem{definition}{Definition}
\title{Setting the Record Straight on Transformer Oversmoothing} 
\author{Antiquus S.~Hippocampus, Natalia Cerebro \& Amelie P. Amygdale \thanks{ Use footnote for providing further information
about author (webpage, alternative address)---\emph{not} for acknowledging
funding agencies.  Funding acknowledgements go at the end of the paper.} \\
Department of Computer Science\\
Cranberry-Lemon University\\
Pittsburgh, PA 15213, USA \\
\texttt{\{hippo,brain,jen\}@cs.cranberry-lemon.edu} \\
\And
Ji Q. Ren \& Yevgeny LeNet \\
Department of Computational Neuroscience \\
University of the Witwatersrand \\
Joburg, South Africa \\
\texttt{\{robot,net\}@wits.ac.za} \\
\AND
Coauthor \\
Affiliation \\
Address \\
\texttt{email}
}
\newcommand{\cmark}{\ding{51}\xspace}%
\newcommand{\xmark}{\ding{55}\xspace}%
\begin{document}

\linespread{1.03}              

\author{\name{Gb{\`e}tondji J-S Dovonon$^\dagger$} \email{gbetondji.dovonon.22@ucl.ac.uk}\\
\name{Michael Bronstein$^\S$}
\email{michael.bronstein@cs.ox.ac.uk}\\
\name{Matt J. Kusner$^{\dagger}$} \email{m.kusner@ucl.ac.uk}\\[2mm]
\addr{$^\dagger$University College London, UK}\\
\addr{$^\S$University of Oxford, UK}
}

\maketitle

\begin{abstract}
  Transformer-based models have recently become wildly successful across a diverse set of domains. At the same time, recent work has shown empirically and theoretically that Transformers are inherently limited. Specifically, they argue that as model depth increases, Transformers oversmooth, i.e., inputs become more and more similar. A natural question is: How can Transformers achieve these successes given this shortcoming? In this work we test these observations empirically and theoretically and uncover a number of surprising findings. We find that there are cases where feature similarity increases but, contrary to prior results, this is not inevitable, even for existing pre-trained models. Theoretically, we show that smoothing behavior depends on the eigenspectrum of the value and projection weights. We verify this empirically and observe that the sign of layer normalization weights can influence this effect. Our analysis reveals a simple way to parameterize the weights of the Transformer update equations to influence smoothing behavior. We hope that our findings give ML researchers and practitioners additional insight into how to develop future Transformer-based models.
\end{abstract}

\section{Introduction}
\label{sec:introduction}

In recent years, Transformer models \cite{vaswani2017attention} have achieved astounding success across vastly different domains: e.g., vision \cite{dosovitskiy2020image,touvron2021training}, NLP \cite{touvron2023llama,wei2023chainofthought,kaddour2023challenges}, chemistry \cite{transformer_chem}, and many others.
However 
their performance can quickly saturate as model depth increases \cite{kaplan2020scaling,wang2022anti}. 
This appears to be caused by fundamental properties of Transformer models. Empirically, researchers first observed that as depth was increased, even to just 12 layers, features became more and more similar to one another \citep{tang2021augmented,zhou2021deepvit,zhou2021refiner,gong2021vision,yan2022addressing}. Theoretically, these observations were characterized as 
(a) \textbf{Input Convergence}: Transformer features converge to the exact same vector  \citep{park2022vision,wang2022anti,bai2022improving};
(b) \textbf{Angle Convergence}: the angle between Transformer features converges to 0 \citep{tang2021augmented,zhou2021deepvit,gong2021vision,yan2022addressing,shi2022revisiting,noci2022signal,guo2023contranorm}; 
or (c) \textbf{Rank Collapse}: Transformer features collapse to a rank one matrix \citep{dong2021attention,shi2022revisiting,noci2022signal,guo2023contranorm,ali2023centered}. In practice, this has led to a search for replacements for Transformer layers, including completely new attention blocks \cite{zhou2021deepvit,zhou2021refiner,wang2022anti,ali2023centered}, normalization layers \cite{guo2023contranorm,zhai2023stabilizing}, altered skip connections \cite{tang2021augmented,noci2022signal,shi2022revisiting}, convolutional layers \cite{park2022vision}, fully-connected layers \cite{liu2021pay,kocsis2022unreasonable,yu2022efficient}, and even average pooling layers \cite{yu2022metaformer}.

But are Transformers destined to oversmooth? In this work we test the above observations theoretically and empirically. Theoretically, we analyze the eigenspectrum of a simplified Transformer layer: fixed attention, weights, and a residual connection. We show even for this simplified setup that: (a) There are cases where all features converge to the same vector, but this is not inevitable, contrary to prior results; (b) Angle convergence is also possible, but not guaranteed; and (c) while rank collapse is likely, it is also not required. Empirically, for existing pre-trained models we find cases where (a) features do not converge to the same vector, (c) feature angles do not converge to $0$, and (c) rank does not collapse. In fact, our analysis uncovers a parameterization that allows one, in some cases better than others, to increase smoothing or reduce it. We observe that the sign of the weights of layer normalization plays a role in how much this parameterization influences smoothing behavior.
\section{Background \& Related Work}
\label{sec:background_related}

\subsection{The Transformer Update}
At their core, Transformers are a linear combination of a set of `heads'. Each head applies its own self-attention function on the input $\rmX \in \mathbb{R}^{n \times d}$ as follows
\begin{align}
\rmA := \mathsf{Softmax}\Big(\frac{1}{\sqrt{k}} \rmX \rmW_{Q} \rmW_{K}^\top \rmX^\top \Big), \label{eq:A}
\end{align}
where the $\mathsf{Softmax}(\cdot)$ function is applied to each row individually. Further, $\rmW_Q, \rmW_K \in \mathbb{R}^{d \times k}$ are learned query and key weight matrices. This `attention map' $\rmA$ then transforms the input to produce the output of a single head: $\rmA \rmX \rmW_{V} \rmW_{\mathsf{proj}}$, where $\rmW_V, \rmW_{\mathsf{proj}} \in \mathbb{R}^{d \times d}$ are learned value and projection weights. Most architectures then add a residual connection: 
\begin{align}
\rmX_{\ell} = \rmX_{\ell-1} +\rmA \rmX_{\ell-1} \rmW_{V} \rmW_{\mathsf{proj}}.
\label{eq:update}
\end{align}
These architectures also consist of layer-specific attention and weights, multiple heads (i.e., multiple $\rmA, \rmW_V$ are to $\rmX$ and the outputs of each head is summed), layer normalization (either in the Post-LN format \citep{vaswani2017attention,wang2019learning,xiong2020layer} e.g., for BERT \citep{kenton2019bert}, RoBERTa \citep{liu2019roberta}, and ALBERT \citep{lan2019albert}, or in the Pre-LN format \citep{baevski2018adaptive}, e.g., for GPT \citep{brown2020language}, ViT \citep{dosovitskiy2020image}, and PALM \citep{chowdhery2023palm} architectures), and fully-connected layers. Unfortunately, these layers make eigenspectrum analysis intractable (we detail why this is the case in Section~\ref{sec:not_destined_to_oversmooth}). However, recent work has demonstrated that simplified Transformer models have surprisingly similar behaviors as full models \citep{von2023transformers,mahankali2023one,ahn2024transformers,zhang2024trained,ahn2023linear}. We find that this is also the case for oversmoothing: even though our analysis considers the restricted update in eq.~(\ref{eq:update}) it can explain the smoothing behavior of full Transformer models (e.g., ViT and DeiT models).


\subsection{What Is Oversmoothing?}

In deep learning, `oversmoothing' broadly describes the tendency of a model to produce more and more similar features as depth increases. For Transformers, prior work largely uses one of three different ways to measure oversmoothing: (a) \textbf{Input Convergence}: Do the inputs converge to the exact same feature vector? \citep{park2022vision,wang2022anti,bai2022improving}; (b) \textbf{Angle Convergence}: Do the angles between inputs converge to 0? \citep{tang2021augmented,zhou2021deepvit,gong2021vision,yan2022addressing,shi2022revisiting,noci2022signal,guo2023contranorm}; (c) \textbf{Rank Collapse}: Does the rank of inputs collapse to 1? \citep{dong2021attention,shi2022revisiting,noci2022signal,guo2023contranorm,ali2023centered}. 

\paragraph{Input Convergence.}
One way to formalize oversmoothing is through the lens of signal-processing \citep{wang2022anti}: the smoothing of a function can be measured by how much it suppresses higher frequencies in the signal, removing smaller fluctuations to highlight the larger trend. To measure the smoothing of the Transformer update in eq.~(\ref{eq:update}) we can compute the ratio of high frequency signals to low frequency signals preserved in $\rmX_\ell$. If this goes to $0$ as $\ell \rightarrow \infty$, all high frequency information is lost: the signal is maximally smoothed. To estimate these signals we can compute the Discrete Fourier Transform (DFT) $\mathcal{F}$ of $\rmX_\ell$, via $\mathcal{F}(\rmX_\ell) := \rmF \rmX_\ell$, where $\rmF \in \mathbb{C}^{n \times n}$ is equal to $\rmF_{k,l} := e^{2\pi \mathrm{i} (k-1)(l-1)}$ for all $k,l \in \{2, \ldots, n\}$ (where $\mathrm{i} := \sqrt{-1}$), and is $1$ otherwise (i.e., in the first row and column). 
Define the Low Frequency Component (LFC) of $\rmX_\ell$ as $\mathrm{LFC}[\rmX_\ell] := \rmF^{-1}\mathrm{diag}([1, 0, \ldots, 0])\rmF \rmX_\ell = (1/n)\boldsymbol{1}\boldsymbol{1}^\top \rmX_\ell$. 
Further, define the High Frequency Component (HFC) of $\rmX_\ell$ as $\mathrm{HFC}[\rmX_\ell] := \rmF^{-1}\mathrm{diag}([0, 1, \ldots, 1])\rmF \rmX_\ell = (\rmI - (1/n)\boldsymbol{1}\boldsymbol{1}^\top) \rmX_\ell$. We can now state the first definition of oversmoothing:

\begin{definition}[Input Convergence \citep{wang2022anti}] \label{def:low-pass} The Transformer update in eq.~(\ref{eq:update}) oversmooths if for all $\rmX \in \mathbb{R}^{n \times d}$ we have that
\begin{align*}
    \lim_{\ell \rightarrow \infty} \frac{\|\mathrm{HFC}[\rmX_\ell]\|_2}{\|\mathrm{LFC}[\rmX_\ell]\|_2} = 0.
\end{align*}
\end{definition}

This definition measures the extent to which inputs converge to the same feature vector. To see this, notice that the term in the numerator $\mathrm{HFC}[\rmX_\ell] = (\rmI - (1/n)\boldsymbol{1}\boldsymbol{1}^\top) \rmX_\ell$ goes to 0 iff $\rmX_\ell = \boldsymbol{1}\overline{\rvx}^\top$ where $\overline{\rvx} \in \mathbb{R}^d$ is a vector where entry $\overline{x}_i$ is the mean of the $i$th column of $\rmX$. This is because $(1/n)\boldsymbol{1}\boldsymbol{1}^\top\rmX = \boldsymbol{1}\overline{\rvx}^\top$. Finally, the required condition $\rmX_\ell = \boldsymbol{1}\overline{\rvx}^\top$ only holds when all input vectors are equal. In the following we will refer to the ratio in the above definition as $\mathrm{HFC}/\mathrm{LFC}$.

\paragraph{Angle Convergence.}
Another way to quantify oversmoothing is via the cosine similarity between inputs:

\begin{definition}[Angle Convergence] \label{def:cosine_similarity}  
The Transformer update in eq.~(\ref{eq:update}) oversmooths if for all $\rmX \in \mathbb{R}^{n \times d}$ we have that
\begin{align*}
    \lim_{\ell \rightarrow \infty} \frac{2}{n(n-1)} \sum_{i=1}^n \sum_{j=i+1}^n \frac{\rvx_{i,\ell}^\top \rvx_{j,\ell}}{\|\rvx_{i,\ell}\|_2\|\rvx_{j,\ell}\|_2} = 1,
\end{align*}
\end{definition}
where $\rvx_{i,\ell} \in \mathbb{R}^d$ is the $i$th row of $\rmX_\ell$. This measures the cosine of the angle $\theta$ between every pair of inputs $\rvx_{i,\ell},\rvx_{j,\ell}$ and is 1 iff $\theta = 0$.

\paragraph{Rank Collapse.}
Finally, we can also measure oversmoothing via rank collapse in $\rmX_\ell$. This is usually described as $\lim_{\ell \rightarrow \infty} \mathrm{rank(\rmX_\ell)} = 1$. While rank can be computed via a singular value decomposition (SVD), it is highly-sensitive to the threshold deciding when a singular should be treated as zero. Instead, \citet{guo2023contranorm} use a continuous approximation of rank called the `effective rank', first introduced by \citet{roy2007effective}.

\begin{definition}[\textbf{Rank Collapse}] \label{def:rank_collapse} 
Given $\rmX_\ell \in \mathbb{R}^{n \times d}$, let $\rmX_\ell = \rmU_\ell \boldsymbol{\Sigma}_\ell \rmV_\ell$ be a singular value decomposition of $\rmX$ with singular values $\mathsf{diag}(\boldsymbol{\Sigma}_\ell) = [\sigma_{1,\ell}, \ldots, \sigma_{r,\ell}]$ for $r \leq \min\{n,d\}$ and $\sigma_{1,\ell} \geq \cdots \geq \sigma_{r,\ell} \geq 0$. Define the following discrete distribution according to the singular values as $p_{i,\ell} = \sigma_{i,\ell} / \sum_{j=1}^r \sigma_{j,\ell}$. The effective rank \citep{roy2007effective} is the exponential of the entropy of this distribution: $\exp(-\sum_{i=1}^r -p_{i,\ell} \log p_{i,\ell})$. The Transformer update in eq.~(\ref{eq:update}) oversmooths if for all $\rmX \in \mathbb{R}^{n \times d}$ we have that
\begin{align*}
    \lim_{\ell \rightarrow \infty} \exp(-\sum_{i=1}^r p_{i,\ell} \log p_{i,\ell}) = 1.
\end{align*}
\end{definition}
\citet{roy2007effective} prove that $1 \leq \exp(-\sum_{i=1}^r p_{i,\ell} \log p_{i,\ell}) \leq \mathrm{rank}(\rmX_\ell) \leq r$. 

Notice that Definitions~\ref{def:low-pass}-\ref{def:rank_collapse} are progressively relaxed, i.e., if an update satisfies an oversmoothing definition, it also satisfies any later definitions.


\subsection{Observations of Transformer Oversmoothing}

The term `oversmoothing' was first coined by \citet{li2018deeper} to describe how GNN node features become more similar with more rounds of message passing. A similar observation was made for Transformers by \citet{zhou2021deepvit}. They observed that as depth was increased, the cosine similarity among self-attention layers also increased. After this work many other works noticed that feature similarity in vision and language Transformers also increased with depth \cite{zhou2021refiner,gong2021vision,tang2021augmented,raghu2021vision,yan2022addressing,shi2022revisiting,wang2022anti,park2022vision,bai2022improving,choi2023graph} found. Multiple works around this time found that it was possible to improve vision Transformers by replacing self-attention layers with convolutional layers \citep{han2021transformer,liu2021swin,jiang2021token,touvron2021going,yuan2021tokens,park2022vision}.

\begin{figure*}[t!]
\centering
\includegraphics[width=\textwidth]{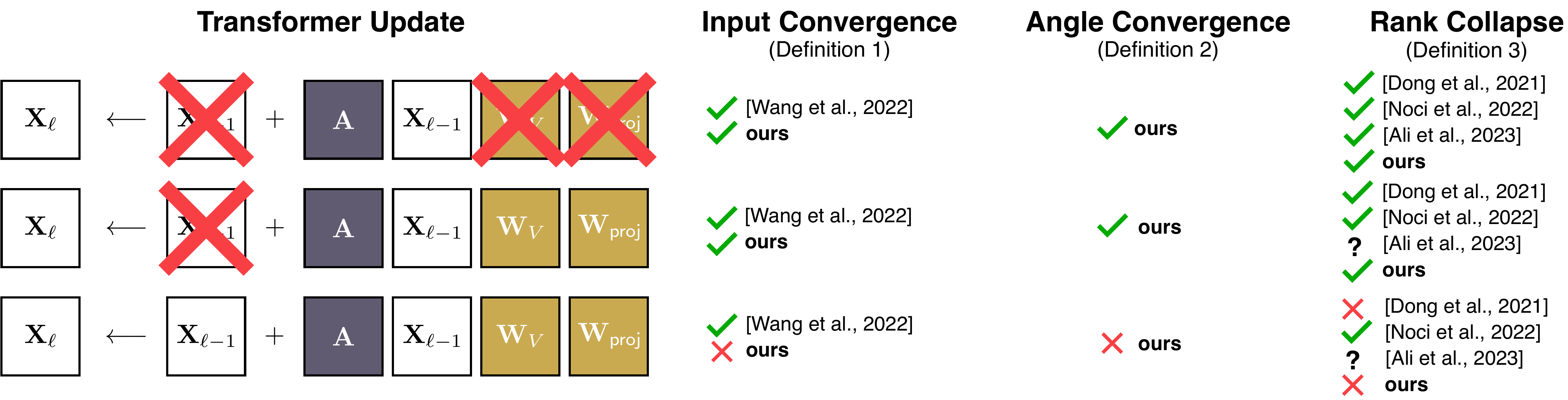}
\caption{\textbf{Theory of Transformer Oversmoothing.} A \cmark indicates prior work says that the corresponding Definition is always satisfied, an \xmark indicates it is not always satisfied. Note that if a work argues a Definition is satisfied, then all later Definitions, which are progressively more relaxed, must also be satisfied.}
\label{fig:theory}
\end{figure*}


\subsection{The Theory of Transformer Oversmoothing}

Figure~\ref{fig:theory} shows current work on the theory of Transformer oversmoothing for three types of Transformer updates.

\paragraph{Input Convergence.}
\cite{wang2022anti} analyzed oversmoothing from the lens of signal processing (Definition~\ref{def:low-pass}). They showed that as the number of self-attention operations tended to infinity, all inputs converge to the same feature vector, producing a low-pass filter. They also analyzed the convergence rate when the residual connection, weights, multiple heads, and a linear layer is added, and found that convergence is not guaranteed. However, they argued that even with these additions oversmoothing still happens: `it is inevitable that high-frequency components are continuously diluted as ViT goes deeper', i.e., Definition~\ref{def:low-pass} holds. At the same time \citet{shi2022revisiting} analyzed oversmoothing using a different notion of input convergence. Curiously, while they argue that oversmoothing can be due to the parameters of layer normalization, their analysis seems to suggest that without layer normalization, oversmoothing does not occur. Because their focus is on the effect of normalization, which we do not consider here (more details on why we do not analyze layer normalization in Section~\ref{sec:not_destined_to_oversmooth}), we will describe input convergence using the definition of \citet{wang2022anti}. 

\paragraph{Angle Convergence.}
As far as we are aware there is no prior work that directly analyzes oversmoothing from the perspective of angle convergence (Definition~\ref{def:cosine_similarity}). However, if an update is shown to input-converge it will also angle-converge (and rank collapse), because input convergence is a stricter requirement than angle convergence (and rank collapse).

\paragraph{Rank Convergence.} 
The first work we are aware of that developed a theory around Transformer oversmoothing was \citet{dong2021attention} using the notion of rank collapse. Initially, they showed that, without skip-connections, repeated self-attention layers converge double-exponentially to a rank 1 matrix. They then show that there exist models where skip-connections counteract this convergence. 
\citet{noci2022signal} contradict this, arguing that oversmoothing still happens when the residual connection is added, but this can be counteracted if the residual connection is scaled appropriately. 
\citet{ali2023centered} show that rank collapse happens without a residual connection and value and projection weights. 
When a residual connection is added our analysis shows that it is possible to avoid rank collapse. 




\section{Do Transformers Always Oversmooth?}
\label{sec:not_destined_to_oversmooth}

\begin{figure}[t!]
\centering
\includegraphics[width=\columnwidth]{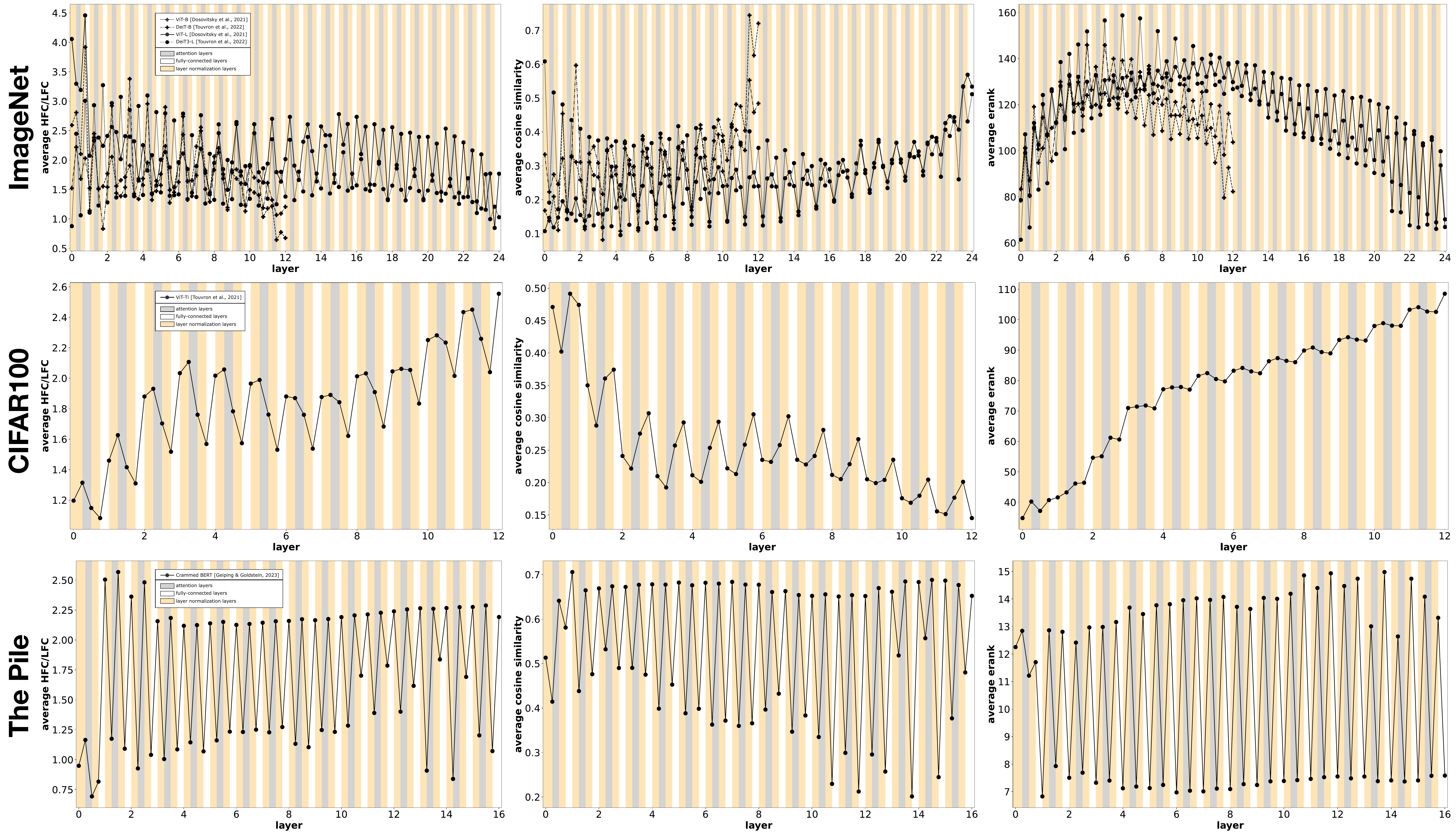}
\caption{\textbf{Smoothing behavior.} The smoothing metrics defined in Definitions~\ref{def:low-pass}-\ref{def:rank_collapse} for different models and datasets in vision and NLP. See text for details.}
\label{fig:base_smoothing}
\end{figure}

Given the current theory on Transformer oversmoothing, how are Transformer models so successful for vision and NLP applications \citep{kenton2019bert,liu2019roberta,lan2019albert,brown2020language,dosovitskiy2020image,chowdhery2023palm}? To investigate this, we computed the above three metrics in Definitions~\ref{def:low-pass}-\ref{def:rank_collapse} on a set of pre-trained models for vision and NLP that have been used in prior work on oversmoothing \citep{wang2022anti,choi2023graph} in Figure~\ref{fig:base_smoothing}. 
We notice that for all ImageNet models (ViT-B, ViT-L \citep{dosovitskiy2020image}, DeiT-B \citep{touvron2021training}, DeiT3-L \citep{Touvron2022DeiTIR}), as depth increases, we do see the metrics approaching their oversmoothing values as described in Definitions~\ref{def:low-pass}-\ref{def:rank_collapse}. Rank (Definition~\ref{def:rank_collapse}) does not consistently decrease and stays relatively high for 12 layer models, but continues to drop as depth is increased. 
However, we see something completely unexpected from the CIFAR model (ViT-Ti \citep{touvron2021training}). All of the metrics \emph{show reduction in smoothing behavior} as depth increases. Similarly, for The Pile model (Crammed BERT \citep{geiping2023cramming}) we see behavior that appears to oscillate between more and less smoothing. These behaviors motivate us to further investigate the Transformer update.

\subsection{Preliminaries}

Our strategy will be to understand the eigenspectrum of the Transformer update in the limit and to use this understanding to derive what the features $\rmX_\ell$ converge to as $\ell \rightarrow \infty$. This will allow us to understand if and when Definitions~\ref{def:low-pass}-\ref{def:rank_collapse} hold. We start by rewriting the Transformer update, eq.~(\ref{eq:update}), to make it more amenable to analysis. Define the $\mathsf{vec}(\rmM)$ operator as converting any matrix $\rmM$ to a vector $\rvm$ by stacking its columns. We can rewrite eq.~(\ref{eq:update}) vectorized as follows
\begin{align}
    \mathsf{vec}(\rmX_\ell) = (\rmI + \underbrace{\rmW_{\mathsf{proj}}^\top \rmW_{V}^\top}_{:= \rmH} \otimes \rmA)\mathsf{vec}(\rmX_{\ell-1}). \label{eq:vec_update}
\end{align}
This formulation is especially useful because $\mathsf{vec}(\rmX_\ell) = (\rmI + \rmH \otimes \rmA)^\ell\mathsf{vec}(\rmX_0)$. We now introduce an assumption on $\rmA$ that is also used in prior work \citep{ali2023centered,wang2022anti}.

\begin{assumption}[\citep{ali2023centered,wang2022anti}]\label{assume:A} The attention matrix is positive, i.e., $\rmA > 0$, and diagonalizable.
\end{assumption}

This assumption nearly always holds unless $\rmA$ numerically underflows. Initialization for $\rmW_Q,\rmW_K$ and normalization for $\rmX$ are often designed to avoid this scenario. In our experiments we never encountered $a_{ij} = 0$ for any element $(i,j) \in \mathbb{R}^n \times \mathbb{R}^n$ or $\rmA$ that was not diagonalizable, in any architecture. Note $\rmA$ is also right-stochastic, i.e., $\sum_{j} a_{i,j} = 1$, by definition in eq.~(\ref{eq:A}). This combined with Assumption~\ref{assume:A} immediately implies the following proposition.

\begin{proposition}[\cite{meyer2023matrix}]\label{prop:A} 
Given Assumption~\ref{assume:A}, all eigenvalues of $\rmA$ lie within $(-1,1]$. There is one largest eigenvalue that is equal to $1$, with corresponding unique eigenvector $\boldsymbol{1}$. 
\end{proposition}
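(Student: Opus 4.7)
The plan is to combine two classical facts: first, that $\rmA$ being right-stochastic forces $\boldsymbol{1}$ to be an eigenvector, and second, that the Perron--Frobenius theorem applied to a strictly positive matrix pins down the rest of the spectrum. The result is standard, so the main job is to assemble these pieces cleanly and check that every conclusion in the proposition statement (existence, uniqueness, and the open/closed endpoints of $(-1,1]$) follows.

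First, I would observe that by the definition of $\rmA$ in eq.~(\ref{eq:A}), each row of $\rmA$ sums to $1$, so $\rmA \boldsymbol{1} = \boldsymbol{1}$ directly; hence $1$ is an eigenvalue with eigenvector $\boldsymbol{1}$. To upper-bound the spectral radius I would note that the induced infinity-norm satisfies $\|\rmA\|_{\infty} = \max_i \sum_j |a_{i,j}| = 1$ (using $\rmA \ge 0$ and the row-stochastic property), so $\rho(\rmA) \le \|\rmA\|_{\infty} = 1$. Combined with the existence of the eigenvalue $1$, this yields $\rho(\rmA) = 1$ exactly. (Gershgorin's theorem applied to the rows gives the same bound if one prefers that route.)

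Next I would invoke the Perron--Frobenius theorem in its strong form for matrices with \emph{strictly} positive entries (Assumption~\ref{assume:A} gives $\rmA > 0$): the spectral radius is a simple, real, positive eigenvalue, its eigenspace is one-dimensional and spanned by a strictly positive vector, and every other eigenvalue $\lambda$ satisfies $|\lambda| < \rho(\rmA)$. Applied here, the Perron eigenvalue must equal $1$ (by the previous step), it is simple, and its eigenspace is spanned by a strictly positive vector, which must therefore be a scalar multiple of $\boldsymbol{1}$. For every other eigenvalue $\lambda$ of $\rmA$, we have $|\lambda| < 1$; in particular, any real eigenvalue other than $1$ lies in the open interval $(-1,1)$, so the full real spectrum lies in $(-1,1]$ as claimed. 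Diagonalizability in Assumption~\ref{assume:A} is not strictly needed for this proposition but is consistent with what we have shown.

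I do not expect any real obstacle: the only point that requires care is that the strict bound $|\lambda|<1$ for non-Perron eigenvalues uses strict positivity of $\rmA$ (not merely nonnegativity), which is precisely what Assumption~\ref{assume:A} provides; otherwise one would need a separate primitivity/aperiodicity argument to rule out eigenvalues on the unit circle other than $1$ (e.g.\ to exclude $\lambda=-1$).
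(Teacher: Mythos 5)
Your proof is correct and takes essentially the same route as the paper: verify $\rmA\boldsymbol{1}=\boldsymbol{1}$ from row-stochasticity, bound the spectral radius by $1$, and invoke the Perron--Frobenius theorem for strictly positive matrices to get simplicity of the Perron root, uniqueness of the positive eigenvector, and the strict bound $|\lambda|<1$ on the rest of the spectrum. The only cosmetic difference is that you package the bound $\rho(\rmA)\le 1$ as the induced $\infty$-norm (or Gershgorin), whereas the paper unrolls the same inequality by hand via the maximal-magnitude entry of an eigenvector. One small point in your favor: the paper opens by asserting that Perron--Frobenius forces \emph{all} eigenvalues of a positive matrix to be real, which is not what the theorem says (positive matrices can have complex eigenvalues); you avoid that misstep by only concluding $|\lambda|<1$ for non-Perron eigenvalues and restricting the interval claim to the real part of the spectrum, which is the accurate reading of the proposition. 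You also correctly flag that diagonalizability is not needed here, and that strict positivity (not mere nonnegativity) is what rules out $\lambda=-1$.
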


We leave the proof to the Appendix. We can now analyze the eigenvalues of the Transformer update equations.

\subsection{The Eigenvalues}

First notice that the eigenvalues of $(\rmI + \rmH \otimes \rmA)^\ell$ can be written in terms of the eigenvalues of $\rmH,\rmA$:


\begin{lemma}
\label{lem:eigenvalues}
Let $\lambda^A_1, \ldots, \lambda^A_n$ be the eigenvalues of $\rmA$ and let $\lambda^H_1, \ldots, \lambda^H_r$ for $r \leq d$ be the eigenvalues of $\rmH$. The eigenvalues of $(\rmI + \rmH \otimes \rmA)^\ell$ are equal to $(1 + \lambda^H_j \lambda^A_i)$ for $j \in \{1,\ldots,r\}$ and $i \in \{1,\ldots,n\}$.
\end{lemma}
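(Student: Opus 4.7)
The plan is to reduce the lemma to two standard facts: the Kronecker-product eigenvalue formula and the spectral mapping theorem applied to the polynomial $p(x) = (1+x)^{\ell}$ (and to $p(x)=1+x$ for the intermediate step). Throughout I treat eigenvalues with multiplicity, so the counts work out.

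First I would compute the eigenvalues of $\rmH \otimes \rmA$. Let $\lambda^A_i$ be an eigenvalue of $\rmA$ with eigenvector $\vu_i \in \mathbb{R}^n$, and $\lambda^H_j$ an eigenvalue of $\rmH$ with eigenvector $\vv_j \in \mathbb{R}^d$. A direct computation using the mixed-product property $(\rmH\otimes\rmA)(\vv_j \otimes \vu_i) = (\rmH \vv_j)\otimes(\rmA \vu_i)$ yields
\begin{equation*}
    (\rmH\otimes\rmA)(\vv_j \otimes \vu_i) \;=\; \lambda^H_j \lambda^A_i \,(\vv_j \otimes \vu_i),
\end{equation*}
so every product $\lambda^H_j \lambda^A_i$ is an eigenvalue of $\rmH \otimes \rmA$. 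By Assumption~\ref{assume:A}, $\rmA$ is diagonalizable and contributes $n$ linearly independent $\vu_i$; working with the Jordan form of $\rmH$ (so that $\vv_j$ ranges over a Jordan basis), the vectors $\vv_j \otimes \vu_i$ span $\mathbb{R}^{nd}$. This accounts for all $nd$ eigenvalues of $\rmH \otimes \rmA$, namely $\{\lambda^H_j \lambda^A_i\}_{j,i}$ with multiplicity.

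Next I would apply the spectral mapping theorem for polynomials: for any square matrix $\rmM$ and polynomial $p$, the eigenvalues of $p(\rmM)$ are exactly $\{p(\mu) : \mu \in \mathrm{spec}(\rmM)\}$, with algebraic multiplicities preserved. Taking $\rmM = \rmH \otimes \rmA$ and $p(x) = (1+x)^{\ell}$, the eigenvalues of $(\rmI + \rmH \otimes \rmA)^\ell$ are therefore $(1 + \lambda^H_j \lambda^A_i)^\ell$, ranging over $j \in \{1,\ldots,r\}$ and $i \in \{1,\ldots,n\}$ (where $r$ tracks the nonzero-multiplicity eigenvalues of $\rmH$ as stated). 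The lemma statement corresponds to the $\ell=1$ factor inside this expression.

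The only real subtlety is that $\rmH$ is not assumed diagonalizable, so $\rmH \otimes \rmA$ may itself fail to be diagonalizable; this is why I would invoke the Jordan-form version of the spectral mapping theorem rather than relying on simultaneous diagonalization of the Kronecker factors. Beyond that, the proof is essentially bookkeeping of multiplicities, and no further inequalities or structural properties of $\rmA$ (e.g., right-stochasticity from Proposition~\ref{prop:A}) are needed at this step — those will only enter when Lemma~\ref{lem:eigenvalues} is used to analyze limiting behavior later.
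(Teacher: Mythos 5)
Your proof is correct and takes essentially the same route as the paper, which simply cites the standard Kronecker eigenvalue theorem (Schacke, Thm.~2.3) without spelling out the spectral-mapping step or the non-diagonalizable-$\rmH$ bookkeeping that you supply. You are also right that the stated eigenvalues should read $(1+\lambda^H_j\lambda^A_i)^\ell$ rather than $(1+\lambda^H_j\lambda^A_i)$ once the exponent $\ell$ is applied; the lemma as written appears to have dropped that exponent.
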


The proof can be derived from Theorem 2.3 of \citep{schacke2004kronecker}. Given this, notice that as the number of layers $\ell$ in the Transformer update eq.~(\ref{eq:vec_update}) increases, one eigenvalue $(1 + \lambda^H_{j^*} \lambda^A_{i^*})$ will dominate the rest (except in cases of ties). 

\begin{definition} [Dominating eigenvalue(s)]
\label{def:dominating_eigenvalue}
At least one of the eigenvalues of $(\rmI + \rmH \otimes \rmA)$ has a larger magnitude than all others, i.e., there exists $j^*,i^*$ (which may be a set of indices if there are ties) such that $|1 + \lambda^H_{j^*}\lambda^A_{i^*}| > |1 + \lambda^H_{j'}\lambda^A_{i'}|$ for all $j' \in \{1, \ldots, r\} \setminus j^*$ and $i' \in \{1, \ldots, n\} \setminus i^*$. These eigenvalues are called \textbf{dominating}.
\end{definition}

Which eigenvalue dominates will control the smoothing behavior of the Transformer.




\begin{theorem}
\label{thm:eigenvalues_full_update}
Given the Transformer update in  eq.~(\ref{eq:vec_update}), let $\{\lambda^A_i\}_{i=1}^n$ and $\{\lambda^H_j\}_{j=1}^r$ for $r \leq d$ be the eigenvalues of $\rmA$ and $\rmH$. Let the eigenvalues be sorted as follows, $\lambda^A_1 \leq \cdots \leq \lambda^A_n$ and $|1 + \lambda^H_1| \leq \cdots \leq |1 + \lambda^H_r|$. As the number of layers $\ell \rightarrow \infty$, there are two types of dominating eigenvalues: \textbf{(1)} $(1 + \lambda^H_{j^*}\eigAn)$. and \textbf{(2)} $(1 + \lambda^H_{j^*}\eigAone)$
\end{theorem}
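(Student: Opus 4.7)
The starting point is Lemma~\ref{lem:eigenvalues}: the eigenvalues of $\rmI + \rmH \otimes \rmA$ are exactly the numbers $1 + \lambda^H_j \lambda^A_i$ for $j \in \{1,\ldots,r\}$ and $i \in \{1,\ldots,n\}$, and those of the $\ell$th power are their $\ell$th powers. Identifying a dominating eigenvalue in the sense of Definition~\ref{def:dominating_eigenvalue} therefore amounts to maximizing $|1 + \lambda^H_j \lambda^A_i|$ jointly over $i$ and $j$, and the goal is to show that any maximizer must have $i^* \in \{1,n\}$, which directly produces the two cases in the statement.

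The core step is a one-variable convexity argument. Fix $j$ and consider $g_j(x) := |1 + \lambda^H_j x|$ as a function of the real variable $x$ on the closed interval $[\lambda^A_1, \lambda^A_n]$, valid because Proposition~\ref{prop:A} places the eigenvalues of $\rmA$ on the real line in $(-1,1]$. Squaring,
\begin{align*}
    g_j(x)^2 = 1 + 2\,\mathrm{Re}(\lambda^H_j)\,x + |\lambda^H_j|^2 x^2,
\end{align*}
which is a convex quadratic in $x$ with nonnegative leading coefficient $|\lambda^H_j|^2$. A convex function on a compact interval attains its maximum at an endpoint, so $\max_i g_j(\lambda^A_i) = \max\bigl(g_j(\lambda^A_1),\, g_j(\lambda^A_n)\bigr)$, and correspondingly $i^* \in \{1,n\}$ for every fixed $j$.

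Maximizing over $j$ next, the overall dominator must thus be either $1 + \lambda^H_{j^*} \lambda^A_n$ (type (1)) or $1 + \lambda^H_{j^*} \lambda^A_1$ (type (2)), as claimed. As a consistency check, in type (1) Proposition~\ref{prop:A} gives $\lambda^A_n = 1$, so the modulus reduces to $|1 + \lambda^H_{j^*}|$; the theorem's sorting $|1 + \lambda^H_1| \leq \cdots \leq |1 + \lambda^H_r|$ then forces the type (1) optimum to be realized at $j^* = r$. Ties, in which several $(j,i)$ pairs achieve the same modulus, correspond to the ``set of indices'' clause in Definition~\ref{def:dominating_eigenvalue} and are handled identically.

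The main subtlety, more conceptual than technical, is that $\rmH = \rmW_{\mathsf{proj}}^\top \rmW_V^\top$ carries no symmetry, so $\lambda^H_j$ may be complex while $\lambda^A_i$ is treated as real; this is precisely why one must argue with $g_j^2$ rather than directly with a piecewise-linear absolute value of a real linear function. Once that is noted, the remainder of the proof is a short quadratic-maximization on an interval with no delicate estimates required.
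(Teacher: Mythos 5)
Your proof is correct, and it takes a genuinely cleaner route than the paper's. The paper establishes Theorem~\ref{thm:eigenvalues_full_update} by first proving Lemma~\ref{lemma:all_cases}, which performs an exhaustive case analysis: first splitting on the sign of $\lambda^A_1$, then (when $\lambda^A_1 < 0$) splitting further on the phase $\varphi^H_r$ of the top-sorted eigenvalue of $\rmH$, and reasoning geometrically about ``shrinking toward the origin'' and ``flipping across the origin'' to maximize distance to $-1$. Your argument replaces all of that with a single observation: for fixed $j$, $|1 + \lambda^H_j x|^2 = 1 + 2\,\mathrm{Re}(\lambda^H_j)\,x + |\lambda^H_j|^2 x^2$ is a convex quadratic in the real variable $x$ (convex even when $\lambda^H_j = 0$), so its maximum over the compact interval $[\lambda^A_1, \lambda^A_n]$ is attained at an endpoint, and since those endpoints are themselves eigenvalues of $\rmA$, the same holds over the discrete eigenvalue set. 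This buys you a case-free proof of the theorem as stated. The trade-off is that the paper's Lemma~\ref{lemma:all_cases} also pins down exactly \emph{which} $\lambda^H_{j^*}$ maximizes in each regime (e.g.\ $\lambda^H_r$ vs.\ $\lambda^H_{\min}$ vs.\ $\lambda^H_k$), information that is not part of Theorem~\ref{thm:eigenvalues_full_update}'s conclusion but is reused later (notably in Corollary~\ref{coro:sharpening}); your convexity argument identifies only $i^* \in \{1,n\}$ and leaves $j^*$ implicit. That is perfectly adequate for this theorem, and your note that in type~(1) the sorting forces $j^* = r$ (up to ties) is a correct partial recovery of that information.

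One small remark on rigor that applies equally to you and to the paper: treating all $\lambda^A_i$ as real rests on Proposition~\ref{prop:A}, whose stated justification (Perron--Frobenius) actually guarantees only that the Perron root is real and simple, not that the full spectrum is real. You inherit this assumption from the paper and correctly flag that your argument is phrased so as to tolerate complex $\lambda^H_j$; if one wanted to drop the realness assumption on $\rmA$ as well, the convexity-in-a-real-variable argument would need to be replaced, but that is outside the scope of what the paper assumes.
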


We leave the proof to the Appendix (where we describe all possible cases). We can now use this result to derive what $\rmX_\ell$ converges to as depth increases.



\subsection{The Features}

\begin{theorem} \label{thm:representation}
Given the Transformer update in  eq.~(\ref{eq:vec_update}), if a single eigenvalue dominates, as the number of total layers $\ell \rightarrow \infty$, the feature representation $\rmX_\ell$ converges to one of two representations:
\textbf{(1)} If $(1 + \lambda^H_j \eigAn)$ dominates then,
\begin{align}
\rmX_\ell \rightarrow (1 + \lambda^H_j \eigAn)^\ell s_{j,n} \eigVecAn {\rvv_j^H}^\top,
\label{eq:oversmoothing}
\end{align}
\textbf{(2)} If $(1 +  \lambda^H_j \eigAone)$ dominates then, 
\begin{align}
\rmX_\ell \rightarrow (1 + \lambda^H_j \eigAone)^\ell s_{j,1} \eigVecAone{\rvv_j^H}^\top
\label{eq:sharpening}
\end{align}
where $\rvv^H,\rvv^A$ are eigenvalues of $\rmH,\rmA$ and $s_{j,i} := \langle {\rvv^Q}^{-1}_{j,i}, \mathsf{vec}(\rmX) \rangle$ and  ${\rvv^Q}^{-1}_{j,i}$ is row $ji$ in the matrix $\rmQ^{-1}$ (here $\rmQ$ is the matrix of eigenvectors of $(\rmI + \rmH \otimes \rmA)$). \textbf{(3)} If multiple eigenvalues have the same dominating magnitude, $\rmX_\ell$ converges to the sum of the dominating terms.
\end{theorem}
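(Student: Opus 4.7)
The plan is to diagonalize the linear operator $(\rmI + \rmH \otimes \rmA)$ driving the vectorized update in eq.~(\ref{eq:vec_update}) and extract the asymptotic leading term as $\ell \to \infty$. First I would recall the Kronecker identity: if $\rmH \rvv^H_j = \lambda^H_j \rvv^H_j$ and $\rmA \rvv^A_i = \lambda^A_i \rvv^A_i$, then $(\rmH \otimes \rmA)(\rvv^H_j \otimes \rvv^A_i) = \lambda^H_j \lambda^A_i \, (\rvv^H_j \otimes \rvv^A_i)$, and since $\rmI = \rmI \otimes \rmI$ commutes with $\rmH \otimes \rmA$, the eigenvectors of $(\rmI + \rmH \otimes \rmA)$ are exactly these Kronecker products with eigenvalues $1 + \lambda^H_j \lambda^A_i$, consistent with Lemma~\ref{lem:eigenvalues}. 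Collecting these into a matrix of column eigenvectors $\rmQ$ with diagonal eigenvalue matrix $\Lambda$ gives the decomposition $\rmI + \rmH \otimes \rmA = \rmQ \Lambda \rmQ^{-1}$ used implicitly in the statement.

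Next I would expand the initial vector in this eigenbasis as
\begin{equation*}
\mathsf{vec}(\rmX_0) = \sum_{j,i} s_{j,i} \, (\rvv^H_j \otimes \rvv^A_i),
\qquad s_{j,i} = \langle {\rvv^Q}^{-1}_{j,i}, \mathsf{vec}(\rmX_0) \rangle,
\end{equation*}
where the coefficients are precisely the inner products against the corresponding rows of $\rmQ^{-1}$. Applying $(\rmI + \rmH \otimes \rmA)^\ell$ scales each basis component by $(1 + \lambda^H_j \lambda^A_i)^\ell$, so
\begin{equation*}
\mathsf{vec}(\rmX_\ell) \;=\; \sum_{j,i} (1 + \lambda^H_j \lambda^A_i)^\ell \, s_{j,i} \, (\rvv^H_j \otimes \rvv^A_i).
\end{equation*}
As $\ell \to \infty$, every non-dominating term is exponentially suppressed relative to $(1 + \lambda^H_{j^*} \lambda^A_{i^*})$ identified in Definition~\ref{def:dominating_eigenvalue}, so the single (or tied) dominating term is the leading-order contribution.

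Finally I would convert back to matrix form with the identity $\mathsf{vec}(\rvu \rvw^\top) = \rvw \otimes \rvu$, which turns each eigencomponent $\rvv^H_j \otimes \rvv^A_i$ into the rank-one matrix $\rvv^A_i {\rvv^H_j}^\top$. Invoking Theorem~\ref{thm:eigenvalues_full_update}: if the dominating eigenvalue has the form $(1 + \lambda^H_j \lambda^A_n)$ then Proposition~\ref{prop:A} identifies $\rvv^A_n = \boldsymbol{1}$, producing eq.~(\ref{eq:oversmoothing}); if it has the form $(1 + \lambda^H_j \lambda^A_1)$ we directly obtain eq.~(\ref{eq:sharpening}); and case (3) follows from linearity since tied dominating eigenvalues all scale at the same rate, so the limit is the sum of the corresponding rank-one terms.

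The main technical obstacle is that Assumption~\ref{assume:A} only guarantees diagonalizability of $\rmA$, so the cleanest version of the argument additionally assumes (or requires justification that) $\rmH$ is diagonalizable; without this, $\rmI + \rmH \otimes \rmA$ might fail to admit the eigendecomposition $\rmQ \Lambda \rmQ^{-1}$. If $\rmH$ is merely trigonalizable, one passes to the Jordan form of $(\rmI + \rmH \otimes \rmA)$, which introduces at most a polynomial-in-$\ell$ prefactor that is still dominated by the exponential eigenvalue growth, so the leading-order limit is unchanged. A secondary edge case is that one implicitly assumes $s_{j^*,i^*} \neq 0$ for the dominating index, which holds for generic $\rmX_0$; otherwise the next largest magnitude eigenvalue takes over and the same argument re-applies.
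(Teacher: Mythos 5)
Your proposal is correct and takes essentially the same route as the paper: diagonalize $(\rmI + \rmH \otimes \rmA)$ via the Kronecker eigendecomposition, expand $\mathsf{vec}(\rmX_0)$ in that eigenbasis with coefficients $s_{j,i}$, let the dominating eigenvalue take over as $\ell \to \infty$, and convert back using $\rvv^H_j \otimes \rvv^A_i = \mathsf{vec}(\rvv^A_i {\rvv^H_j}^\top)$. Your two caveats at the end (diagonalizability of $\rmH$, and $s_{j^*,i^*} \neq 0$ for the dominating index) are genuinely needed but left implicit in the paper's proof, so flagging them is a small improvement rather than a divergence.
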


\begin{corollary} 
\label{coro:no-residual-features} 
If the residual connection is removed in the Transformer update, then the eigenvalues are of the form $(\lambda^H_{j}\lambda^A_{i})$. Further, $(\lambda^H_{j^*}\eigAn)$ is always a dominating eigenvalue, and $\rmX_\ell \rightarrow  (\lambda^H_{j^*} \eigAn)^\ell s_{j,n} \eigVecAn {\overline{\rvv}_{j^*}^H}^\top$ as $\ell \rightarrow \infty$, where ${\overline{\rvv}_{j^*}^H}$ is the sum of all eigenvectors with eigenvalue equal to the dominating eigenvalue $\lambda^H_{j^*}$.
\end{corollary}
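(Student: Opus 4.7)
The plan is to mirror the arguments for Lemma~\ref{lem:eigenvalues} and Theorem~\ref{thm:representation}, exploiting the fact that removing the $\rmI$ term strictly simplifies the dominance analysis. First I would rewrite the residual-free update $\rmX_\ell = \rmA\rmX_{\ell-1}\rmW_V\rmW_{\mathsf{proj}}$ in vectorized form as $\mathsf{vec}(\rmX_\ell) = (\rmH\otimes\rmA)^\ell\mathsf{vec}(\rmX_0)$ with $\rmH := \rmW_{\mathsf{proj}}^\top\rmW_V^\top$, via the same identity that produced eq.~(\ref{eq:vec_update}). The Kronecker-product spectrum result (Thm.~2.3 of \cite{schacke2004kronecker}) that underlies Lemma~\ref{lem:eigenvalues} then gives that the eigenvalues of $\rmH\otimes\rmA$ are the $rn$ products $\lambda^H_j\lambda^A_i$, with corresponding eigenvectors $\rvv^H_j\otimes\rvv^A_i$. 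This already establishes the first sentence of the corollary.

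For dominance, Proposition~\ref{prop:A} gives $\lambda^A_n = 1$ and $|\lambda^A_i|<1$ for $i<n$. Since the modulus factors as $|\lambda^H_j\lambda^A_i| = |\lambda^H_j|\,|\lambda^A_i|$ (in sharp contrast to the residual case, which instead maximizes $|1+\lambda^H_j\lambda^A_i|$), for each fixed $j$ the maximum over $i$ is strictly attained at $i=n$; maximizing over $j$ then picks out any $j^*$ with $|\lambda^H_{j^*}|$ maximal. Hence $(\lambda^H_{j^*}\lambda^A_n)$ is a dominating eigenvalue in the sense of Definition~\ref{def:dominating_eigenvalue}, and no competing branch at $i=1$ appears --- which is precisely why only case~(1) of Theorem~\ref{thm:eigenvalues_full_update} survives here.

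To extract the asymptotic feature, I would diagonalize $\rmH\otimes\rmA = \rmQ\Lambda\rmQ^{-1}$ and iterate, giving
\begin{align*}
\mathsf{vec}(\rmX_\ell) = \sum_{j,i}(\lambda^H_j\lambda^A_i)^\ell\, s_{j,i}\,(\rvv^H_j\otimes\rvv^A_i),
\end{align*}
where $s_{j,i} = \langle {\rvv^Q}^{-1}_{j,i},\mathsf{vec}(\rmX_0)\rangle$ as in Theorem~\ref{thm:representation}. Dividing by $(\lambda^H_{j^*}\lambda^A_n)^\ell$ sends every non-dominating term to zero, and the surviving contributions are indexed by pairs $(j,n)$ with $\lambda^H_j = \lambda^H_{j^*}$. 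Using $\rvv^A_n = \boldsymbol{1}$ from Proposition~\ref{prop:A} and grouping these eigenvectors via the definition of $\overline{\rvv}^H_{j^*}$, the sum collapses to $s_{j,n}(\overline{\rvv}^H_{j^*}\otimes\boldsymbol{1})$. Unvectorizing via the identity $\mathsf{vec}^{-1}(\rvw\otimes\boldsymbol{1}) = \boldsymbol{1}\rvw^\top$ yields the stated limit. The only real subtlety is bookkeeping the algebraic multiplicity of $\lambda^H_{j^*}$ so that the single symbolic $\overline{\rvv}^H_{j^*}$ correctly absorbs all eigenvectors sharing that eigenvalue; everything else is a direct specialization of the proof of Theorem~\ref{thm:representation}.
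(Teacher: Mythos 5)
Your proposal is correct and follows essentially the same route as the paper's own proof: vectorize the residual-free update, apply the Kronecker-product spectrum lemma to get eigenvalues $\lambda^H_j\lambda^A_i$, use Proposition~\ref{prop:A} (namely $\lambda^A_n=1$ strictly dominating $|\lambda^A_i|$ for $i<n$) to identify the dominating branch, and then read the asymptotic feature off the eigendecomposition sum. Your observation about bookkeeping the algebraic multiplicity of $\lambda^H_{j^*}$ when collapsing the surviving terms into $\overline{\rvv}^H_{j^*}$ is actually more careful than the paper's one-line treatment, which silently absorbs the distinct coefficients $s_{j,n}$ into a single symbol.
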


See the Appendix for proofs of the above statements. Given these results, we can now understand when the oversmoothing definitions apply.

\subsection{When Oversmoothing Happens}


\begin{theorem} \label{thm:smoothing}
Given the Transformer update  eq.~(\ref{eq:vec_update}), as the number of total layers $\ell \rightarrow \infty$, if \textbf{(1)} one eigenvalue $(1 + \lambda^H_{j^*} \eigAn)$ dominates, we have input convergence, angle convergence, and rank collapse. If \textbf{(2)} one eigenvalue $(1 + \lambda^H_{j^*} \eigAone)$ dominates, we do not have input convergence or angle convergence, but we do have rank collapse. If \textbf{(3)} multiple eigenvalues have the same dominating magnitude and: (a) there is at least one dominating eigenvalue $(1 + \lambda^H_{j^*} \lambda^A_{i^*})$ where $\lambda^A_{i^*} \neq \eigAn$, then we do not have input convergence or angle convergence, if also (b) the geometric multiplicity of $\eigAone$ and $\lambda^H_{j^*}$ are both greater than 1, then we also do not have rank collapse.
\end{theorem}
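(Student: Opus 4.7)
The plan is to apply Theorem~\ref{thm:representation} case-by-case to extract the limiting shape of (a scalar multiple of) $\rmX_\ell$, and then verify Definitions~\ref{def:low-pass}--\ref{def:rank_collapse} directly. All three metrics are scale-invariant, so the growing factor $(1+\lambda^H_{j^*}\lambda^A_{i^*})^\ell$ is irrelevant and only the structure of the dominating eigenvectors matters. For case (1), Theorem~\ref{thm:representation}(1) gives $\rmX_\ell \propto \boldsymbol{1}(\rvv^H_{j^*})^\top$: a rank-one matrix whose rows are identical. Directly, $(\rmI-\tfrac{1}{n}\boldsymbol{1}\boldsymbol{1}^\top)$ annihilates the limit, pairwise row cosines equal $1$, and the rank is $1$, establishing all three notions of oversmoothing.

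For case (2), Theorem~\ref{thm:representation}(2) gives $\rmX_\ell \propto \rvv^A_1(\rvv^H_{j^*})^\top$, still rank one so rank collapse holds. To rule out input and angle convergence I would invoke Perron--Frobenius on the positive, diagonalizable $\rmA$ (Assumption~\ref{assume:A}): its left Perron eigenvector $\vw>\boldsymbol{0}$ for eigenvalue $1$ satisfies $\vw^\top \rmA \rvv^A_1 = \lambda^A_1 \vw^\top \rvv^A_1 = \vw^\top \rvv^A_1$, so $\lambda^A_1\neq 1$ forces $\vw^\top \rvv^A_1 = 0$; positivity of $\vw$ then forces $\rvv^A_1$ to have entries of mixed sign. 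Hence the rows of the limit are unequal (input convergence fails) and at least one pair has cosine $-1$, keeping the average in Definition~\ref{def:cosine_similarity} bounded strictly below $1$.

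For case (3), Theorem~\ref{thm:representation}(3) gives $\rmX_\ell \propto \sum_k c_k(\rmX)\,\rvv^A_{i_k}(\rvv^H_{j_k})^\top$. Under (a), at least one summand has $\rvv^A_{i_k}\neq\boldsymbol{1}$, which by the same Perron--Frobenius argument has mixed-sign entries; selecting $\rmX$ so that $c_k$ is non-zero, the column space of the limit contains a direction that is not a multiple of $\boldsymbol{1}$, so the rows cannot all be equal nor all be non-negative multiples of a common vector, ruling out input and angle convergence. Under (b), the dominating eigenspace of $\rmI + \rmH \otimes \rmA$ is the Kronecker product of the (multi-dimensional) eigenspaces of $\lambda^H_{j^*}$ and $\lambda^A_1$; via $\mathsf{vec}^{-1}$ this contains two linearly independent rank-one matrices, so for generic $\rmX$ the limit has rank at least $2$, defeating rank collapse.

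The main obstacle is angle convergence in case (3a): when two or more dominating summands share $\rvv^H$ but use different $\rvv^A$, every row of the limit lies along the same fixed direction and all cosines are $\pm 1$, so one must check that the sign pattern of the composite row-coefficient vector is not accidentally made uniform by the addition of an eigenvector equal to $\boldsymbol{1}$. My approach is to choose $\rmX$ so that the non-$\boldsymbol{1}$ summand's coefficient dominates, at which point the mixed signs of $\rvv^A_{i_k}$ survive in the composite, producing at least one pair of rows with cosine $-1$ and preventing Definition~\ref{def:cosine_similarity} from being satisfied.
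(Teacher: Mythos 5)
Your proof takes the same overall route as the paper's: apply Theorem~\ref{thm:representation} to get the limiting form of $\rmX_\ell$, factor out the scale, and check Definitions~\ref{def:low-pass}--\ref{def:rank_collapse} directly against the resulting rank-one (or sum-of-rank-one) structure. Cases (1) and (3b) match the paper's argument essentially verbatim.

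Where you differ is in rigor, and your version is actually tighter in two places. First, in case (2) the paper simply asserts that $\rvv^A_1$ ``can contain both positive and negative components''; you instead \emph{prove} that it \emph{must}, by pairing $\rvv^A_1$ against the strictly positive left Perron eigenvector of $\rmA$ and using $\lambda^A_1\neq 1$ to force orthogonality. (The same fact is latent in the paper's proof of Proposition~\ref{prop:A}, which notes that all non-Perron eigenvectors have a negative component, but you make the deduction explicit.) Second, and more substantively, in case (3a) the paper waves at ``as shown for case (2),'' which does not actually cover the situation where several dominating summands share a common $\rvv^H$: there the limit is $\bigl(\sum_k c_k \rvv^A_{i_k}\bigr)(\rvv^H)^\top$, and if the composite coefficient vector happened to be entrywise of one sign, all row cosines would be $+1$ and angle convergence \emph{would} hold, contradicting the claim. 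You notice this obstacle and close it by choosing $\rmX$ to zero out (or dominate) the $\boldsymbol{1}$-eigenvector coefficient $s_{j,n}$ relative to the mixed-sign one; this works because the $s_{j,i}$ are linearly independent functionals of $\mathsf{vec}(\rmX)$ (rows of an invertible $\rmQ^{-1}$), so both are independently tunable. That is a genuine gap in the paper's one-line justification, and your fix is the right one. The remaining steps --- rank collapse in (2) from the rank-one limit, and rank $\geq 2$ in (3b) from the Kronecker-product eigenspace containing at least two linearly independent rank-one matrices --- are the same as the paper's.

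One small caveat worth stating in a polished write-up: in case (2), if $\boldsymbol{1}^\top\rvv^A_1 = 0$ the LFC of the limit vanishes and the $\mathrm{HFC}/\mathrm{LFC}$ ratio is $+\infty$ rather than a finite nonzero constant; either way it does not tend to $0$, so your conclusion stands, but the degenerate sub-case deserves a sentence. Similarly, all three metrics implicitly require $s_{j,i}\neq 0$ (and nonzero rows for the cosine), which holds for all $\rmX$ outside a measure-zero set --- enough, since the definitions quantify over all $\rmX$ and a single counterexample suffices.
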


\begin{corollary}
\label{coro:no-residual-input-convergence}
If the residual connection is removed in the Transformer update, input convergence, angle convergence, and rank collapse are guaranteed.
\end{corollary}

The proofs are left to the Appendix. 
The above statements follow directly from Theorem~\ref{thm:representation} and Corollary~\ref{coro:no-residual-features}. They tell us that whenever a single eigenvalue $(1 + \lambda^H_j \eigAn)$ dominates, \emph{every input in $\rmX_\ell$ converges to the same feature vector}. This happens because $\eigVecAnAbs = \eigVecAn$ and so $\rvx_{\ell,i} \sim \rvv^H_j$, for all $i$ as $\ell \rightarrow \infty$. But there is a second case: whenever the single eigenvalue $(1 + \lambda^H_j \eigAone)$ dominates, each feature is not guaranteed to be identical. However, $\rmX_\ell \rightarrow (1 + \lambda^H_j \eigAone)^\ell s_{j,1} \eigVecAone{\rvv_j^H}^\top$ is still a matrix of rank one. If instead multiple eigenvalue dominate and the geometric multiplicity of $\eigAone$ and $\lambda^H_{j^*}$ are both greater than 1 then $\rmX_\ell$ is a sum of at least 2 rank-1 matrices and so we do not have rank collapse.

Theorem~\ref{thm:smoothing} largely contradicts prior theoretical results on oversmoothing. We suspect a few reasons for this. First, if multiple types of analyses are used within one paper, and they give conflicting results, resolving this can be especially challenging \citep{wang2022anti}. Second, certain assumptions may not always hold in practice, e.g., \citet{noci2022signal} assume that $\rmA = \frac{1}{n} \boldsymbol{1}\boldsymbol{1}^\top$ at initialization.

\paragraph{On Layer Normalization \& Feed Forward Layers.}
Most Transformers also include layer normalization and feedforward layers. Unfortunately, both of these break our analysis. For instance, a repeated Pre-LN layer can be represented by the following update,
\begin{align*}
    \mathsf{vec}(\rmX_\ell) = (\rmI + \rmH \rmD^{-1} \otimes \rmA)^\ell\mathsf{vec}(\rmX_{0}) - \ell(\mathsf{vec}(\rmA \boldsymbol{1}\rvb^\top\rmD^{-1}\rmH^\top)), 
\end{align*}
where $\rvb$ and $\rmD^{-1}$ are terms introduced by the normalization layer. However, as far as we are aware there is no way to characterize the relationship between the eigenvalues of $(\rmI + \rmH \rmD^{-1} \otimes \rmA)$ and the eigenvalues of $\rmH$, $\rmA$, and $\rmD$, without introducing further assumptions (e.g., if $\rmH$ is symmetric there is a known relationship). This difficulty also applies to Post-LN layers. We encounter a similar difficulty for feed forward layers,
\begin{align*}
\mathsf{vec}(\rmX_\ell) = (\rmW^\top \otimes \rmI + \rmW^\top \rmH \otimes \rmA)^\ell\mathsf{vec}(\rmX_{0}), 
\end{align*}
where $\rmW$ are the parameter of the feed forward layer. Similar to layer normalization, as far as we are aware, we cannot characterize the eigenvalues of $(\rmW^\top \otimes \rmI + \rmW^\top \rmH \otimes \rmA)$ in terms of the eigenvalues of $\rmH$, $\rmA$, and $\rmW$, without further assumptions.

A natural question is can we use the above analysis to influence the smoothing behavior of Transformer models? In the next section we derive a Corollary of Theorem~\ref{thm:eigenvalues_full_update} that allows one to do so using a simple reparameterization of $\rmH$.

\label{sec:reparameterization}
\section{A Reparameterization that Influences Smoothing}

\begin{figure}[t!]
\centering
\includegraphics[width=\columnwidth]{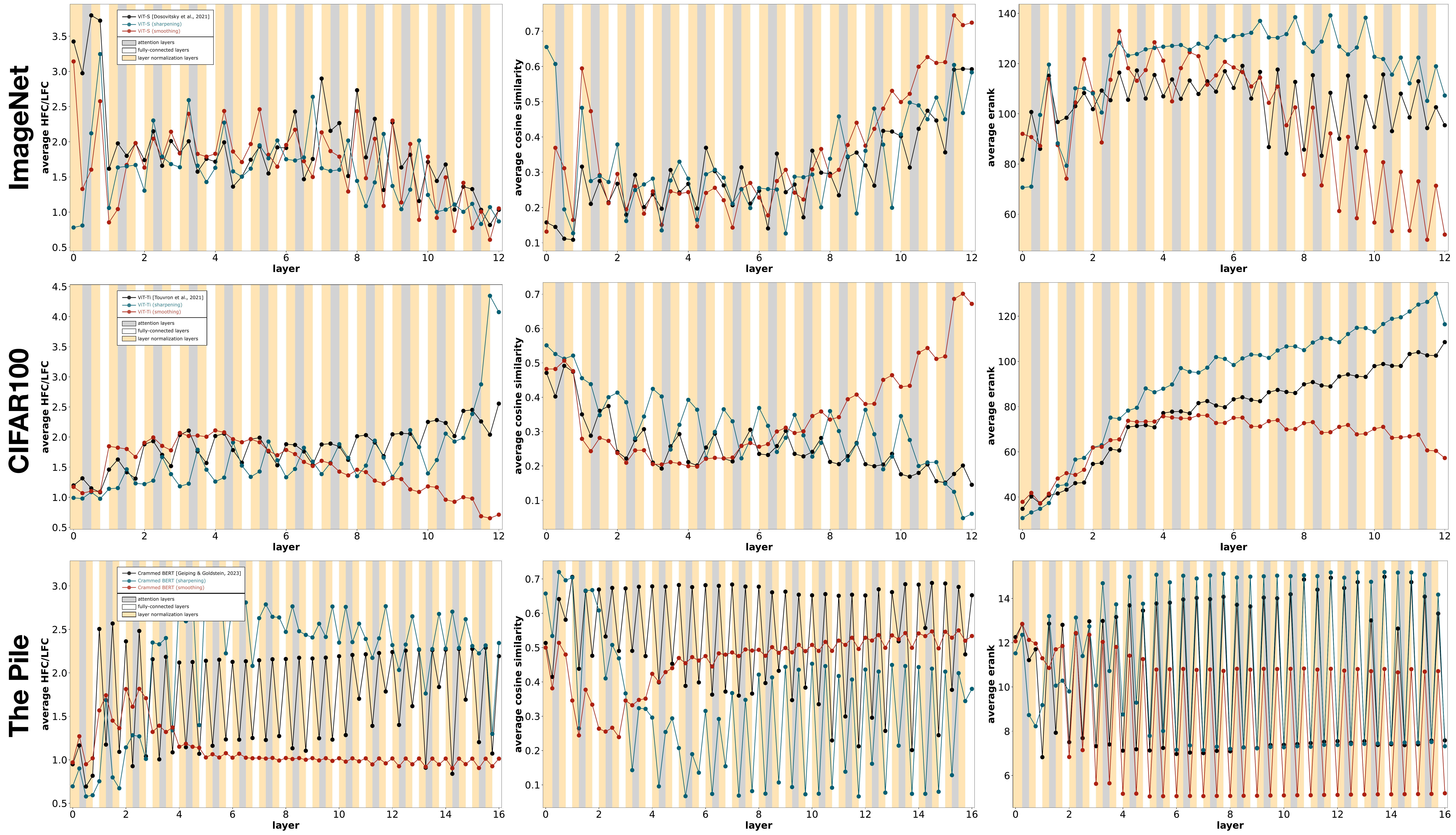}
\caption{\textbf{Influencing smoothing.} The smoothing metrics defined in Definitions~\ref{def:low-pass}-\ref{def:rank_collapse} for different models and datasets when $\rmH$ is reparameterized as $\rmH = \rmV_H \Lambda_H \rmV^{-1}_H$. See text for details.}
\label{fig:reparameterization}
\end{figure}

Theorem~\ref{thm:smoothing} tells us that if $(1 + \lambda_j^H \eigAn)$ dominates then this will cause, whereas if instead $(1 + \lambda_j^H \eigAone)$ dominates we avoid it. To find $\rmA$ and $\rmH$ that are guaranteed to have either $(1 + \lambda_j^H \eigAone)$ or $(1 + \lambda_j^H \eigAn)$ dominate we could dig through the proof of Theorem~\ref{thm:eigenvalues_full_update} and consider all cases. However, in practice, $\rmA$ changes for every batch of data $\rmX$. Because of this, we would like to find a solution that involves only controlling the eigenvalues of $\rmH$. Luckily, we can simplify the proof of Theorem~\ref{thm:eigenvalues_full_update} into a much simpler condition.


\begin{corollary} \label{coro:sharpening}
If the eigenvalues of $\rmH$ fall within $[-1,0)$, then $(1 + \lambda^H_{j^*} \eigAone)$ dominates. If the eigenvalues of $\rmH$ fall within $(0,\infty)$, then $(1 + \lambda^H_{j^*} \eigAn)$ dominates. 
\end{corollary}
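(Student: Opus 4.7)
The plan is to reduce the claim to a direct comparison of magnitudes $|1 + \lambda^H_j \lambda^A_i|$ across all pairs $(j,i)$. Lemma~\ref{lem:eigenvalues} expresses the spectrum of $(\rmI + \rmH \otimes \rmA)$ in exactly this form, and Proposition~\ref{prop:A} gives $\lambda^A_n = 1$ together with the strict bound $|\lambda^A_i| < 1$ for every $i < n$. In each of the two sign regimes for the eigenvalues of $\rmH$ I would fix $j$ and maximise $|1 + \lambda^H_j \lambda^A_i|$ over $i$, then maximise over $j$, and check that the maximising index $i$ matches the one stated.

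For the positive regime $\lambda^H_j \in (0,\infty)$, let $j^* \in \arg\max_j \lambda^H_j$; the goal is to show $|1 + \lambda^H_{j'}\lambda^A_{i'}| < 1 + \lambda^H_{j^*}$ for every $(j',i') \neq (j^*,n)$. If $i' = n$ this is the trivial inequality $1 + \lambda^H_{j'} < 1 + \lambda^H_{j^*}$. If $i' < n$ and $1+\lambda^H_{j'}\lambda^A_{i'} \geq 0$, the inequality $1 + \lambda^H_{j'}\lambda^A_{i'} < 1 + \lambda^H_{j'} \le 1 + \lambda^H_{j^*}$ follows from $\lambda^A_{i'} < 1$. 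The subtle sub-case is $\lambda^H_{j'} > 1$ with $\lambda^A_{i'}$ sufficiently negative that $1 + \lambda^H_{j'}\lambda^A_{i'} < 0$; then the magnitude equals $\lambda^H_{j'}|\lambda^A_{i'}| - 1$, which is bounded above by $\lambda^H_{j'} - 1$ using $|\lambda^A_{i'}| < 1$, and $\lambda^H_{j'} - 1 < 1 + \lambda^H_{j^*}$ is immediate.

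For the regime $\lambda^H_j \in [-1,0)$, I would first observe that $|\lambda^H_j \lambda^A_i| \leq 1$ with equality only in the degenerate corner $\lambda^H_j = -1$, $\lambda^A_i = 1$, for which the eigenvalue collapses to $0$. Hence $1 + \lambda^H_j \lambda^A_i \geq 0$ and $|1 + \lambda^H_j \lambda^A_i| = 1 + \lambda^H_j \lambda^A_i$. Because $\lambda \mapsto 1 + \lambda^H_j \lambda$ is strictly decreasing (as $\lambda^H_j < 0$), for each fixed $j$ the maximum over $i$ is attained at the smallest attention eigenvalue $\lambda^A_1$. Maximising over $j$ then selects some $j^*$ (the specific winner depends on the sign of $\lambda^A_1$), and the dominating eigenvalue is $(1 + \lambda^H_{j^*}\lambda^A_1)$.

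The main obstacle is the positive regime when $\lambda^H_j$ may exceed $1$: here $1 + \lambda^H_j \lambda^A_i$ can flip sign for sufficiently negative $\lambda^A_i$, so a single monotonicity argument does not suffice and one must bound the absolute value explicitly in two pieces. The essential ingredient that makes both pieces close with strict inequality is the Perron--Frobenius bound $|\lambda^A_i| < 1$ for $i < n$ supplied by Proposition~\ref{prop:A}; without this strict separation one could not rule out a tie with $|1 + \lambda^H_{j^*}\lambda^A_n|$ and the corollary would fail.
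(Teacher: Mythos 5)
Your proof is correct and takes essentially the same approach as the paper's: reduce to a direct comparison of the magnitudes $|1 + \lambda^H_j \lambda^A_i|$ via Lemma~\ref{lem:eigenvalues} and close the comparison using the Perron--Frobenius facts $\lambda^A_n = 1$ and $|\lambda^A_i| < 1$ for $i < n$ from Proposition~\ref{prop:A}. Your observation in the $[-1,0)$ regime that $|\lambda^H_j\lambda^A_i|\le 1$ forces $1+\lambda^H_j\lambda^A_i\ge 0$, so the magnitude is literally $1+\lambda^H_j\lambda^A_i$ and is monotone decreasing in $\lambda^A_i$ for each fixed $j$, is a tidy unification of the paper's separate treatment of $\lambda^A_1>0$ versus $\lambda^A_1<0$, but the underlying argument is the same.
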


See the Appendix for a proof. 
To ensure that the eigenvalues of $\rmH$ fall in these ranges, we propose to directly parameterize its eigendecomposition. Specifically, define $\rmH$ as $\rmH = \rmV_H \Lambda_H \rmV^{-1}_H$, where $\rmV_H$ is a full-rank matrix and $\Lambda_H$ is diagonal. We learn parameters $\rmV_H$ by taking gradients in the standard way (i.e., directly and through the inversion). 
To learn the diagonal of $\Lambda_H$, i.e., $\mathsf{diag}(\Lambda_H)$, we parameterize the sharpening model as \textcolor{MidnightGreen}{$\mathsf{diag}(\Lambda_H) := \mathsf{clip}(\boldsymbol{\psi}, [-1, 0])$}, where $\boldsymbol{\psi}$ are tunable parameters and $\mathsf{clip}(\boldsymbol{\psi}, [l,u]) := \min(\max(\boldsymbol{\psi},l),u)$ forces all of $\boldsymbol{\psi}$ to lie in $[l,u]$. Similarly we parameterize the smoothing model as \textcolor{Rufous}{$\mathsf{diag}(\Lambda_H) := \mathsf{clip}(\boldsymbol{\psi}, [0, 1])$}.\footnote{While we could have allowed the smoothing model to use the space of positive reals via \textcolor{Rufous}{$\mathsf{diag}(\Lambda_H) := |\boldsymbol{\psi}|$} , we found that restricting the space of allowed eigenvalues stabilized training.} 

\begin{table*}[t!]
        \centering
        \resizebox{\textwidth}{!}{
        \begin{tabular}{c|ccc|ccccccc|ccc}
        \toprule
           & \multicolumn{3}{c|}{CIFAR100} & \multicolumn{7}{c|}{ImageNet} & \multicolumn{3}{c}{The Pile} \\ 
           \midrule
layer & ViT-Ti  & \textcolor{MidnightGreen}{ViT-Ti (sharpening)}  & \textcolor{Rufous}{ViT-Ti (smoothing)}  & ViT-S  & \textcolor{MidnightGreen}{ViT-S (sharpening)} & \textcolor{Rufous}{ViT-S (smoothing)} & ViT-B  & DeiT-B & ViT-L  & DeiT3-L & Cram. Bert & \textcolor{MidnightGreen}{Cram. Bert (sharpening)} & \textcolor{Rufous}{Cram. Bert (smoothing)} \\ 
\midrule
LayerNorm  & -0.073  & +0.011   & -0.043   & -0.126 & +0.276 & -0.244 & +0.157 & +0.338 & -0.364 & +0.811  & -0.915 & -0.086 & -0.019 \\
Attention  & -0.165  & +0.418   & -0.121   & -0.123 & -0.048 & +0.043 & -0.535 & -0.961 & +0.008 & -1.012  & +0.994 & -0.042 & -0.003 \\
MLP        & +0.425  & +0.418   & +0.168   & +0.175 & -0.496 & +0.270 & +0.061 & +0.258 & +0.624 & -0.604  & +0.914 & +0.316 & +0.043 \\ 
\bottomrule
\end{tabular}}
\caption{Change in $\mathrm{HFC}/\mathrm{LFC}$ for each layer type, across all models.} 
\label{tab:hfclfcchange}
\end{table*}
\begin{table*}[t!]
        \centering
        \resizebox{\textwidth}{!}{
        \begin{tabular}{c|ccc|ccccccc|ccc}
        \toprule
           & \multicolumn{3}{c|}{CIFAR100} & \multicolumn{7}{c|}{ImageNet} & \multicolumn{3}{c}{The Pile} \\ 
           \midrule
Layer type & ViT-Ti  & \textcolor{MidnightGreen}{ViT-Ti (sharpening)}  & \textcolor{Rufous}{ViT-Ti (smoothing)}  & ViT-S  & \textcolor{MidnightGreen}{ViT-S (sharpening)} & \textcolor{Rufous}{ViT-S (smoothing)} & ViT-B  & DeiT-B & ViT-L  & DeiT3-L & Cram. Bert & \textcolor{MidnightGreen}{Cram. Bert (sharpening)} & \textcolor{Rufous}{Cram. Bert (smoothing)} \\ 
\midrule
LayerNorm  & +0.573 & +1.304 & +0.684  & +14.975 & +9.447 & +2.628  & +10.436 & +12.41 & +17.746 & +19.18  & +6.088 & +6.084 & +5.027 \\
Attention  & -0.171 & +4.754 & -2.870   & -15.454 & -5.185 & +6.203  & -10.247 & -14.301 & -18.671 & -16.939 & -5.927 & -6.338 & -5.002 \\
MLP        & +5.171 & -0.217 & +3.118   & -13.352 & -17.056 & -8.405 & -10.298 & -8.821 & -17.308 & -21.052  & -6.541 & -6.093 & -5.481 \\ 
\bottomrule
\end{tabular}}

\caption{Change in effective rank for each layer type, across all models.} 
\label{tab:erankchange}
\end{table*}
\begin{table*}[t!]
        \centering
        \resizebox{\textwidth}{!}{
        \begin{tabular}{c|ccc|ccccccc|ccc}
        \toprule
           & \multicolumn{3}{c|}{CIFAR100} & \multicolumn{7}{c|}{ImageNet} & \multicolumn{3}{c}{The Pile} \\ 
           \midrule
Layer type & ViT-Ti  & \textcolor{MidnightGreen}{ViT-Ti (sharpening)}  & \textcolor{Rufous}{ViT-Ti (smoothing)}  & ViT-S  & \textcolor{MidnightGreen}{ViT-S (sharpening)} & \textcolor{Rufous}{ViT-S (smoothing)} & ViT-B  & DeiT-B & ViT-L  & DeiT3-L & Cram. Bert & \textcolor{MidnightGreen}{Cram. Bert (sharpening)} & \textcolor{Rufous}{Cram. Bert (smoothing)} \\ 
\midrule
LayerNorm  & +0.006 & -0.004 & +0.001  & -0.057 & -0.013 & -0.056  & -0.061 & -0.064 & +0.002 & -0.166  & -0.274 & -0.232 & -0.025 \\
Attention  & +0.04 & -0.086 & +0.052  & +0.129 & +0.054 & -0.005  & +0.163 & +0.192 & +0.072 & +0.211 & +0.263 & +0.258 & +0.035 \\
MLP        & -0.078 & +0.054 & -0.038  & +0.258 & +0.021 & +0.021  & +0.005 & -0.052 & -0.058 & +0.117 & +0.111 & +0.188 & +0.016 \\ 
\bottomrule
\end{tabular}}

\caption{Change in cosine similarity for each layer type, across all models.} 
\label{tab:cossimchange}
\end{table*}

\paragraph{Initialization.}
We initialize $\rmH = \rmV_H \Lambda_H \rmV^{-1}_H$ to mimic the initializations used in the ViT-Ti and Bert baselines, which are initialized using He initialization \cite{he2015delving}. Specifically, we first initialize $\rmV_H$ using He initialization. To initialize $\mathsf{diag}(\Lambda_H)$ we sample from a normal distribution with mean 0, as randomly initialized matrices will typically have normally distributed eigenvalues centered at 0. We noticed that if we set the standard deviation of this normal distribution to 1, the sampled values of $\mathsf{diag}(\Lambda_H)$ are often too large and lead to training instability. To stabilize training, we set the standard deviation to 0.1. All other training and architecture details are in the Appendix.

\paragraph{Reparameterization results.}
Figure~\ref{fig:reparameterization} show the effect of reparameterizing $\rmH$ and restricting the range of eigenvalues to encourage \textcolor{MidnightGreen}{sharpening} and \textcolor{Rufous}{smoothing}. For ImageNet we see that this does not have a large effect of $\mathrm{HFC}/\mathrm{LFC}$ and cosine similarity, but influences the effective rank somewhat in later layers. For CIFAR100 the \textcolor{MidnightGreen}{sharpening} parameterization reduces smoothing in all metrics while the \textcolor{Rufous}{smoothing} parameterization further increases smoothing. For The Pile the \textcolor{MidnightGreen}{sharpening} parameterization has little effect on $\mathrm{HFC}/\mathrm{LFC}$ and effective rank, but seems to reduce smoothing somewhat in terms of cosine similarity. The opposite is true of the \textcolor{Rufous}{smoothing} parameterization: little effect on cosine similarity, but increased smoothing for $\mathrm{HFC}/\mathrm{LFC}$ and effective rank.

\paragraph{Impact of layer normalization.}
The position and weights of the layer normalization layer can impact the filtering behavior of a layer. In \cref{fig:ln_impact} we parameterize two layers, one smoothing and one sharpening and apply it to an input image for 128 iterations in order to visualize its asymptotic behavior. We repeat the process with the two most common layer normalization implementations: Pre-LN and Post-LN \cite{xiong2020layer}, each with a positive then negative weight matrix sampled randomly. We do not use a bias since our focus is showing the impact of the normalization weight. When the weights are negative, Pre-LN reverses the expected filtering behavior of the layer. That is due to the normalization happening after the attention but before the residual connection. With Post-LN, the attention and residual connection are both applied before the normalization, so we still observe the expected behavior though it tends to be unstable for sharpening layers. This also means that while our reparametrization gives us control over the filtering behavior of the attention layers in a model, we can lose control when the layer normalization weights become negative. This could explain the surprising results we observe in \cref{tab:hfclfcchange} and \cref{fig:reparameterization}.

\begin{figure}[t!]
\centering
\includegraphics[width=\columnwidth]{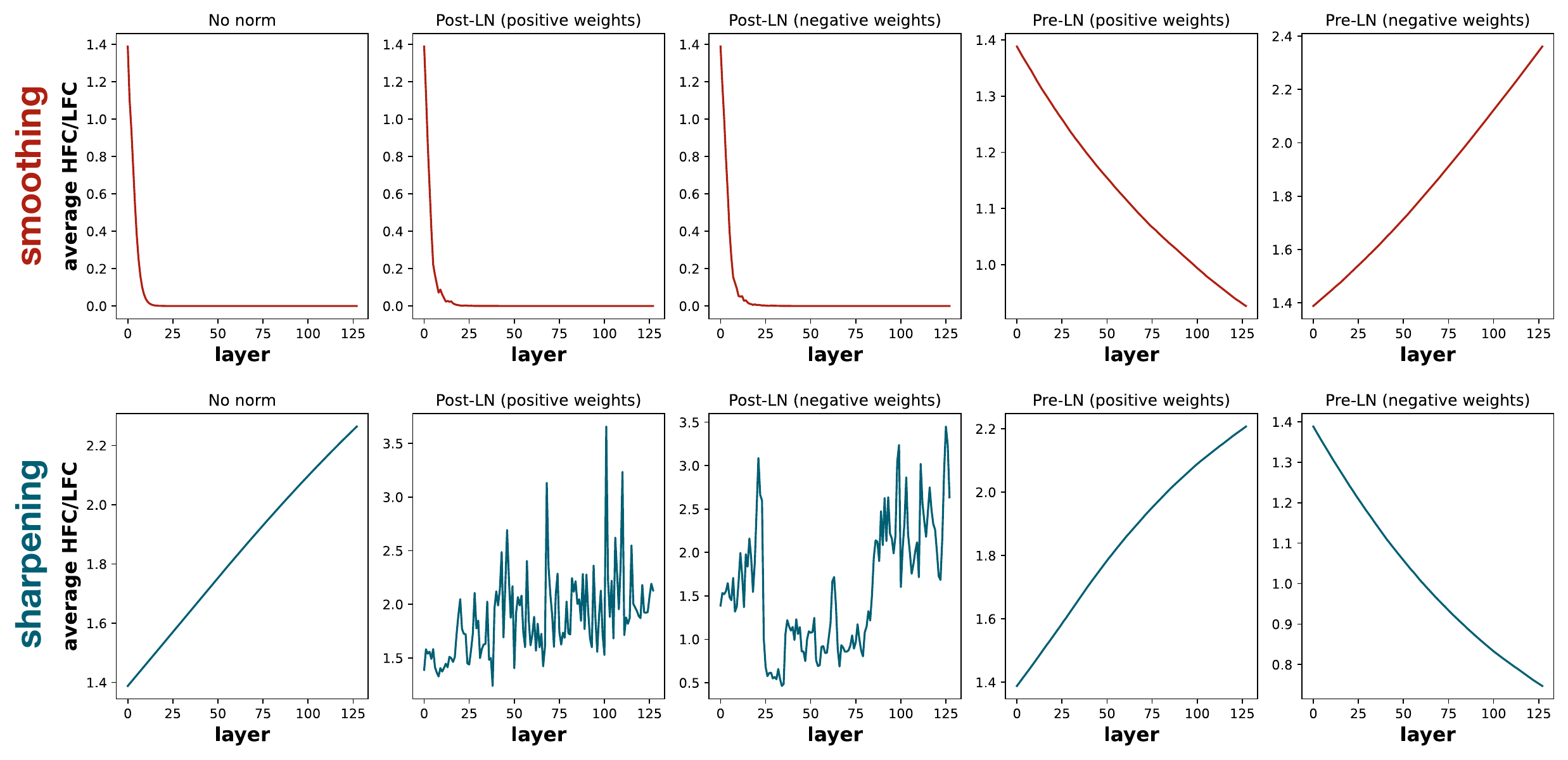}
\caption{\textbf{Impact of Layer Normalization.} The average $\mathrm{HFC}/\mathrm{LFC}$ for the Transformer update with repeated layers eq.~(\ref{eq:vec_update}) and different types of layer normalization (Post-LN \citep{vaswani2017attention}, Pre-LN \citep{baevski2018adaptive}) where the weights of the layer normalization are fixed to be positive or negative. See text for details.}
\label{fig:ln_impact}
\end{figure}

\section{Discussion}
In this paper, we unified past work on Transformer oversmoothing and tested these results empirically and theoretically. Empirically, we found that, contrary to prior findings, oversmoothing is not inevitable, even in existing pre-trained models. Theoretically, we presented a new analysis detailing how the eigenspectrum of attention and weight matrices influences smoothing behavior. Based on this we introduced a reparamterization of the Transformer weights that allows one to influence the smoothing behavior. This influence changes depending on the normalization scheme used. 
One limitation of the current theoretical analysis is that the results are asymptotic, applying in the limit as $\ell \rightarrow \infty$. 
It would be useful to understand the rates of convergence of each of the results. 
Alongside this we would like to expand the theoretical analysis to account for layer normalization and feed forward layers. Special conditions will likely need to be placed on $\rmH$ to enable this analysis, such as symmetric $\rmA,\rmH$ \citep{sander2022sinkformers}. We leave these extensions for future work.



\bibliography{references}
\bibliographystyle{icml2024}

\newpage
\appendix
\onecolumn



\section*{Appendix}

\section{Proofs}
\setcounter{proposition}{0}
\setcounter{theorem}{0}
\setcounter{lemma}{1}
\setcounter{corollary}{0}

\begin{proposition}[\cite{meyer2023matrix}]
Given Assumption~\ref{assume:A}, all eigenvalues of $\rmA$ lie within $(-1,1]$. There is one largest eigenvalue that is equal to $1$, with corresponding unique eigenvector $\boldsymbol{1}$. 
\end{proposition}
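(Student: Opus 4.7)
The plan is a direct application of the Perron--Frobenius theorem to the positive row-stochastic matrix $\rmA$, as provided in any standard reference on nonnegative matrices (e.g.\ Meyer, which is the citation attached to the proposition). The proof breaks naturally into three short steps.

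First, I would observe that $\boldsymbol{1}$ is a right eigenvector of $\rmA$ with eigenvalue $1$. This is immediate from the definition of $\rmA$ in eq.~(\ref{eq:A}): the softmax is applied rowwise and so each row sums to $1$, giving $\rmA\boldsymbol{1}=\boldsymbol{1}$. This establishes that $1$ is in the spectrum and exhibits $\boldsymbol{1}$ as a corresponding eigenvector.

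Second, I would bound the magnitudes of all eigenvalues by $1$ using Gershgorin's disc theorem. For each row $i$, the Gershgorin disc is centered at $a_{ii}$ with radius $\sum_{j\neq i} |a_{ij}| = 1 - a_{ii}$, where the equality uses positivity ($a_{ij}>0$) together with the row-stochastic property. Any such disc is contained in the closed unit disc, so every eigenvalue $\lambda$ satisfies $|\lambda|\le 1$.

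Finally — and this is the only non-elementary ingredient — I would invoke Perron--Frobenius for (strictly) positive matrices: the spectral radius is attained by a simple real eigenvalue, the associated eigenvector can be chosen entrywise positive, and every other eigenvalue has \emph{strictly} smaller modulus. Combined with steps one and two, the spectral radius must equal $1$, it is simple, and the (essentially unique) positive eigenvector from step one is $\boldsymbol{1}$; all remaining eigenvalues then lie strictly inside the unit disc, yielding the stated interval $(-1,1]$ (as interpreted in the paper, which sorts eigenvalues on the real line). The main ``obstacle'' is really just the appeal to Perron--Frobenius itself; everything else is a one-line check. Notably, strict positivity (rather than mere nonnegativity) is what upgrades the Perron eigenvalue from ``largest in modulus weakly'' to ``strictly largest and simple,'' which is precisely what is needed for uniqueness of both the eigenvalue $1$ and the eigenvector $\boldsymbol{1}$.
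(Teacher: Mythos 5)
Your proof is correct and follows essentially the same route as the paper: exhibit $\boldsymbol{1}$ as a right eigenvector with eigenvalue $1$, bound every eigenvalue's modulus by $1$, then invoke Perron--Frobenius for a positive matrix to get strict dominance, simplicity of the Perron root, and uniqueness (up to scaling) of the positive eigenvector. The one mechanical difference is the tool used for the modulus bound: you cite Gershgorin's disc theorem (each disc is centered at $a_{ii}$ with radius $1-a_{ii}$, hence inside the closed unit disc), whereas the paper argues directly by picking the largest-magnitude component of the eigenvector and applying the triangle inequality together with row-stochasticity. These are interchangeable one-liners for positive row-stochastic matrices, so the choice is cosmetic. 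A minor point in your favor: you correctly note that Perron--Frobenius only gives $|\lambda|<1$ for the non-Perron eigenvalues and that the interval $(-1,1]$ should be read as a statement about real eigenvalues; the paper opens by asserting that Perron--Frobenius forces \emph{all} eigenvalues of a positive matrix to be real, which is not a consequence of that theorem (positive matrices can have complex non-Perron eigenvalues), so your phrasing is the more careful one.
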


\begin{proof}
First, because $\rmA$ is positive, 
by the Perron-Frobenius Theorem \cite{meyer2023matrix} all eigenvalues of $\rmA$ are in $\mathbb{R}$ (and so there exist associated eigenvectors that are also in $\mathbb{R}$). Next, recall the definition of an eigenvalue $\lambda$ and eigenvector $\rvv$: $\rmA \rvv = \lambda \rvv$. Let us write the equation for any row $i \in \{1, \ldots, n\}$ explicitly:
\begin{align}
    a_{i1}v_1 + \cdots + a_{in}v_n = \lambda v_i. \nonumber
\end{align}
Further let,
\begin{align}
    v_{\max} := \max\{|v_1|, \ldots, |v_n|\}
\end{align}
Note that $v_{\max} > 0$, otherwise it is not a valid eigenvector. Further let $k_{\max}$ be the index of $\rvv$ corresponding to $v_{\max}$. Then we have,
\begin{align}
    |\lambda| v_{\max} =&\; |a_{k_{\max} 1}v_1 + \cdots + a_{k_{\max} n}v_n| \nonumber \\
    \leq&\; a_{k_{\max} 1}|v_{1}| + \cdots + a_{k_{\max} n}|v_{n}| \nonumber \\
    \leq&\; a_{k_{\max} 1}|v_{k_{\max}}| + \cdots + a_{k_{\max} n}|v_{k_{\max}}| \nonumber \\
    =&\; (a_{k_{\max} 1} + \cdots + a_{k_{\max} n})|v_{k_{\max}}| = |v_{\max}| \nonumber
\end{align}
The first inequality is given by the triangle inequality and because $a_{ij} > 0$. The second is given by the definition of $v_{\max}$ as the maximal element in $\rvv$. The final inequality is given by the definition of $\rmA$ in eq.~(\ref{eq:A}) as right stochastic (i.e., all rows of $\rmA$ sum to 1) and because $|v_{k_{\max}}| = |v_{\max}|$. Next, note that because $v_{\max} > 0$, it must be that $\lambda \leq 1$. Finally, to show that the one largest eigenvalue is equal to 1, recall by the definition of $\rmA$ in eq.~(\ref{eq:A}) that $\rmA \boldsymbol{1} = \boldsymbol{1}$, where $\boldsymbol{1}$ is the vector of all ones. So $\boldsymbol{1}$ is an eigenvector of $\rmA$, with eigenvalue $\lambda^*=1$. Because $a_{ij} > 0$, and we showed above that all eigenvalues must lie in in $[-1,1]$, by the Perron-Frobenius theorem \cite{meyer2023matrix} $\lambda^*=1$ is the Perron root. This means that all other eigenvalues $\lambda_i$ satisfy the following inequality $|\lambda_i| < \lambda^*$. Further $\boldsymbol{1}$ is the Perron eigenvector, and all other eigenvectors have at least one negative component, making $\boldsymbol{1}$ unique. Finally, because $\rmA$ is diagonalizable it has $n$ linearly independent eigenvectors. 
\end{proof}

We now prove a lemma that will allow us to prove Theorem~\ref{thm:eigenvalues_full_update}.

\begin{lemma}
\label{lemma:all_cases}
Consider the Transformer update in  eq.~(\ref{eq:vec_update}). Let $\{\lambda^A_i, \rvv^A_i\}_{i=1}^n$ and $\{\lambda^H_j, \rvv^H_j\}_{j=1}^r$ for $r \leq d$ be the eigenvalue and eigenvectors of $\rmA$ and $\rmH$. Let the eigenvalues (and associated eigenvectors) be sorted as follows, $\eigAone \leq \cdots \leq \eigAn$ and $|1 + \lambda^H_1| \leq \cdots \leq |1 + \lambda^H_r|$. Let $\varphi^H_1, \ldots, \varphi^H_r$ be the phases of $\lambda^H_1, \ldots, \lambda^H_r$. 
As the number of layers $L \rightarrow \infty$, one eigenvalue dominates the rest (multiple dominate if there are ties):
\begin{equation*}
\resizebox{\columnwidth}{!}{$
    \begin{cases}
        \begin{rcases}
            (1 + \lambda^H_r \eigAn) & \hspace{\maxmin} \text{if $|1 + \lambda^H_r \eigAn| \geq 1$} \\
            (1 + \lambda^H_{\min}\eigAone) & \hspace{\maxmin} \text{if $|1 + \lambda^H_r \eigAn| < 1$} \\
        \end{rcases} & \text{if $\eigAone > 0$} \\
        \begin{rcases}
            (1 + \lambda^H_r \eigAn) & \hspace{\maxmin} \text{if $|1 + \lambda^H_r \eigAn| > |1 + \lambda^H_{k} \eigAone|$} \\
            (1 + \lambda^H_k \eigAone) & \hspace{\maxmin} \text{if $|1 + \lambda^H_r \eigAn| < |1 + \lambda^H_{k} \eigAone|$} \\
        \end{rcases} & \text{if $\eigAone < 0, \varphi^H_r \in [-\frac{\pi}{2}, \frac{\pi}{2}]$} \\
        \begin{rcases}
            (1 + \lambda^H_r \eigAn) & \hspace{\maxmin} \text{if $|1 + \lambda^H_r \eigAn| > |1 + \lambda^H_r \eigAone|$} \\
            (1 + \lambda^H_r \eigAone) & \hspace{\maxmin} \text{if $|1 + \lambda^H_r \eigAn| < |1 + \lambda^H_r \eigAone|$} \\
        \end{rcases} & \text{if $\eigAone < 0, \varphi^H_r \in (\frac{\pi}{2}, \pi] \cup [-\pi, -\frac{\pi}{2})$}
    \end{cases}
$}
\end{equation*}
where $\lambda^H_{\min}$ be the eigenvalue of $\rmH$ with smallest magnitude and $\lambda^H_k$ is the eigenvalue with the largest index $k$ such that $\varphi^H_{k} \in (\pi/2, \pi] \cup [-\pi, -\pi/2)$.
\end{lemma}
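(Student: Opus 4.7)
The plan is to rank the magnitudes $|1 + \lambda^H_j \lambda^A_i|$ directly, since by Lemma~\ref{lem:eigenvalues} these are the eigenvalues of $(\rmI + \rmH \otimes \rmA)$, and the one of largest modulus (raised to $\ell$) controls the asymptotic behavior. The goal is thus to identify $\arg\max_{(j,i)} |1 + \lambda^H_j \lambda^A_i|^2$ in each of the listed sub-cases.

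The starting observation is that, because $\lambda^A_i \in \mathbb{R}$ by Proposition~\ref{prop:A},
\begin{equation*}
|1 + \lambda^H_j \lambda^A_i|^2 \;=\; 1 + 2\mathrm{Re}(\lambda^H_j)\,\lambda^A_i + |\lambda^H_j|^2\,(\lambda^A_i)^2
\end{equation*}
is a convex quadratic in the real variable $\lambda^A_i$. Its maximum over a finite subset of $[\eigAone, \eigAn]$ that contains both endpoints must be attained at an endpoint, so for every $j$ the best $i$ is either $i=1$ or $i=n$. This reduces the problem to comparing two one-dimensional optimizations: $\max_j |1 + \lambda^H_j|$, which by the sorting hypothesis equals $|1 + \lambda^H_r|$ and is attained at $j=r$, versus $\max_j |1 + \lambda^H_j \eigAone|$.

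For the second problem I would write $\lambda^H_j = \rho_j e^{\mathrm{i}\varphi^H_j}$, giving $|1 + \lambda^H_j \eigAone|^2 = 1 + 2\rho_j \cos(\varphi^H_j)\eigAone + \rho_j^2 \eigAone^2$, and then split on the sign of $\eigAone$ and on whether $\cos(\varphi^H_j) \geq 0$ (equivalently $\varphi^H_j \in [-\pi/2, \pi/2]$). When $\eigAone > 0$, convexity of $g_j(t) := |1 + \lambda^H_j t|^2$ on $[0,1]$ yields $g_j(\eigAone) \leq \max(1, |1 + \lambda^H_j|^2)$, so the comparison between $|1 + \lambda^H_r|$ and $1$ cleanly decides whether the $i=n$ or the $i=1$ endpoint wins. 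When $\eigAone < 0$, the sign of $\cos(\varphi^H_j)$ controls the sign of the linear term of $g_j(\eigAone)$: phases with $\cos(\varphi^H_j) < 0$ contribute positively at $i=1$ and among them the largest-indexed wins (producing the $\lambda^H_k$ of the statement), while phases in $[-\pi/2, \pi/2]$ only compete at $i=n$; this produces exactly the $\lambda^H_k$ versus $\lambda^H_r$ dichotomy.

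I expect the most delicate branch to be $\eigAone > 0$ with $|1 + \lambda^H_r| < 1$, where every candidate magnitude is strictly below $1$ and one must show that the minimal-modulus eigenvalue $\lambda^H_{\min}$ wins at $i=1$. The hypothesis $|1 + \lambda^H_j| < 1$ forces $2\mathrm{Re}(\lambda^H_j) + |\lambda^H_j|^2 < 0$ for every $j$, tying the real part to the modulus; exploiting this coupling to show that both the linear and quadratic deviations of $g_j(\eigAone)$ from $1$ are minimized by the smallest $\rho_j$ is the crux of the argument. The remaining branches then fall out by reapplying the same endpoint comparison together with the ordering of the $|1 + \lambda^H_j|$.
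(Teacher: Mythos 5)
Your endpoint reduction is clean and correct: since each $\lambda^A_i$ is real and $g_j(t) := |1 + \lambda^H_j t|^2$ is a convex quadratic in $t$, its maximum over $\{\lambda^A_1, \dots, \lambda^A_n\}$ is attained at $t = \eigAone$ or $t = \eigAn = 1$, so only the two endpoints matter. The paper performs the same reduction via a geometric ``distance to $-1$'' argument. Your treatment of the branch $\eigAone > 0$, $|1 + \lambda^H_r| \geq 1$ is also sound, following from $g_j(\eigAone) \leq \max\bigl(1, |1 + \lambda^H_j|^2\bigr) \leq |1 + \lambda^H_r|^2$.

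The gap is precisely where you flag it, and it cannot be closed as you propose. Take $\rmH$ with real eigenvalues $\{-0.5,\,-1.9\}$ and $\eigAone = 0.9$: the sort gives $\lambda^H_r = -1.9$ (so $|1 + \lambda^H_r\eigAn| = 0.9 < 1$) and $\lambda^H_{\min} = -0.5$, so the lemma's claimed dominator has modulus $|1 + (-0.5)(0.9)| = 0.55$, yet $|1 + \lambda^H_r\eigAn| = 0.9$ is strictly larger (and even $|1 + \lambda^H_r\eigAone| = 0.71$ beats it). The reason the coupling $2\mathrm{Re}(\lambda^H_j) + |\lambda^H_j|^2 < 0$ does not save you is that whether $\eigAone|\lambda^H_j|$ lies above or below $1$ determines whether shrinking $\lambda^H_j$ moves $1 + \lambda^H_j\eigAone$ toward or away from the origin, so no single modulus-ordering picks the argmax. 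The paper's own proof makes exactly the unjustified assertion you would need (``the eigenvalue of $\rmH$ that can be moved farthest is the one with the smallest overall magnitude''), so you have faithfully reproduced the paper's strategy — but that strategy has a hole at the step you identify. The same failure appears in the $\eigAone < 0$, $\varphi^H_r \in [-\pi/2,\pi/2]$ branch: you argue that the largest-indexed negative-real-part eigenvalue wins, but the index is set by $|1 + \lambda^H_j|$ while the contest at $\eigAone$ rewards large $|\lambda^H_j|$, and these orderings need not agree. With $\lambda^H \in \{-1.5,\,-0.1,\,0.5\}$ and $\eigAone = -0.9$, the sort picks $\lambda^H_k = -0.1$ and the lemma returns $|1 + \lambda^H_r\eigAn| = 1.5$, while in fact $|1 + (-1.5)(-0.9)| = 2.35$ dominates. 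So the specific indices $\lambda^H_{\min}$ and $\lambda^H_k$ named in the statement are not the actual maximizers, and neither your sketch nor the paper's proof establishes the lemma as written.
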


\begin{proof}
Given Lemma~\ref{lem:eigenvalues}, 
the eigenvalues and eigenvectors of $(\rmI + \rmH \otimes \rmA)$ are equal to $(1 + \lambda^H_j\lambda^A_i)$ and  $\rvv^H_j \otimes \rvv^A_i$ for all $j \in \{1, ..., d\}$ and $i \in \{1, \ldots, n\}$. 
Recall that eigenvalues (and associated eigenvectors) are sorted in the following order $\lambda^A_1 \leq \cdots \leq \lambda^A_n$ and $|1+\lambda^H_1| \leq \cdots \leq |1+\lambda^H_d|$. 
Our goal is to understand the identity of the dominating eigenvalue(s) $\lambda^H_{j^*}\lambda^A_{i^*}$ for all possible values of $\lambda_H, \lambda_A$. 

First recall that $\lambda^A_i \in (-1,1]$ and $\eigAn = 1$. A useful way to view selecting $\lambda^H_j \lambda^A_i$ to maximize $|1 + \lambda^H_j \lambda^A_i|$ is as maximizing distance to $-1$. If (i), $\eigAone > 0$ then $\lambda^A_i$, for all $i \in \{1, \ldots, n-1\}$ always shrinks $\lambda^H_j$ to the origin and $\eigAn$ leaves it unchanged. Because of how the eigenvalues are ordered we must have that $|1 + \lambda^H_r| = |1 + \lambda^H_j \eigAn| \leq |1 + \lambda^H_r \eigAn| = |1 + \lambda^H_r|$. If $|1 + \lambda^H_r \eigAn| \geq 1$ then shrinking any $\lambda^H_i$ to the origin will also move it closer to $-1$. However, if $|1 + \lambda^H_r \eigAn| < 1$ then shrinking to the origin can move $\lambda^H_i$ farther from $-1$ than $|1 + \lambda^H_r \eigAn|$. The eigenvalue of $\rmH$ that can be moved farthest is the one with the smallest overall magnitude, defined as $\lambda^H_{\min}$. The eigenvalue of $\rmA$ that can shrink it the most is $\eigAone$. This completes the first two cases.

If instead (ii), $\eigAone < 0$ then it is possible to `flip' $\lambda^H_j$ across the origin, and so the maximizer depends on $\varphi^H_r$. If a) $\varphi^H_r \in [-\pi/2, \pi/2]$ then let $\lambda^H_{k}$ be the eigenvalue with the largest index $k$ such that $\varphi^H_{k} \in (\pi/2, \pi] \cup [-\pi, -\pi/2)$. It is possible that `flipping' this eigenvalue across the origin makes it farther away than $\lambda^H_r$, i.e., $|1 + \lambda^H_{k} \eigAone| > |1 + \lambda^H_r \eigAn|$. In this case $(1 +\lambda^H_{k} \eigAone)$ dominates, otherwise $(1 + \lambda^H_r \eigAn)$ dominates. If they are equal then both dominate. If instead b) $\varphi^H_r \in (\pi/2, \pi] \cup [-\pi, -\pi/2)$ then either $|1 + \lambda^H_r \eigAn| > |1 + \lambda^H_{j'} \lambda^A_{i'}|$ for all $j' \neq d$ and $i' \neq n$, and so $(1 +  \lambda^H_r \eigAn)$ dominates, or `flipping' $\lambda^H_r$ increases its distance from $-1$, and so $|1 + \lambda^H_r \eigAone| > |1 + \lambda^H_{j'} \lambda^A_{i'}|$ for all $j' \neq d$ and $i' \neq n$, and so $(1 + \lambda^H_r \eigAone)$ dominates. Because we cannot have that $|1 + \lambda^H_r \eigAn| = |1 + \lambda^H_r \eigAone|$ as $\eigAone > -1$ this covers all cases.
\end{proof}

Now we can prove Theorem~\ref{thm:eigenvalues_full_update}.

\begin{theorem}
Given the Transformer update in  eq.~(\ref{eq:vec_update}), let $\{\lambda^A_i\}_{i=1}^n$ and $\{\lambda^H_j\}_{j=1}^r$ for $r \leq d$ be the eigenvalues of $\rmA$ and $\rmH$. Let the eigenvalues be sorted as follows, $\lambda^A_1 \leq \cdots \leq \lambda^A_n$ and $|1 + \lambda^H_1| \leq \cdots \leq |1 + \lambda^H_r|$. As the number of layers $\ell \rightarrow \infty$, there are two types of dominating eigenvalues: \textbf{(1)} $(1 + \lambda^H_{j^*}\eigAn)$. and \textbf{(2)} $(1 + \lambda^H_{j^*}\eigAone)$
\end{theorem}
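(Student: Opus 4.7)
My plan is to prove this by combining Lemma~\ref{lem:eigenvalues} with a clean geometric observation about how multiplication by $\lambda^A_i$ acts on a fixed $\lambda^H_j$ in the complex plane. By Lemma~\ref{lem:eigenvalues}, the eigenvalues of $(\rmI + \rmH \otimes \rmA)$ are $\{1 + \lambda^H_j \lambda^A_i\}$ over $j \in \{1,\dots,r\}$ and $i \in \{1,\dots,n\}$. Since the magnitude of a given eigenvalue raised to the $\ell$-th power asymptotically dominates all others of strictly smaller magnitude, identifying the dominating eigenvalue(s) reduces to finding the pair(s) $(j^*,i^*)$ that maximize $|1 + \lambda^H_j \lambda^A_i|$. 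Geometrically, this quantity is the distance in $\mathbb{C}$ from the point $\lambda^H_j \lambda^A_i$ to $-1$.

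The key step is to fix $j$ and argue that only $i \in \{1, n\}$ can be optimal. By Proposition~\ref{prop:A}, the eigenvalues $\lambda^A_i$ are all real and lie in the interval $(-1, 1]$. Therefore, as $i$ varies, the points $\lambda^H_j \lambda^A_i$ all lie on the line segment in $\mathbb{C}$ connecting $\lambda^H_j \eigAone$ to $\lambda^H_j \eigAn$, which passes through the origin with direction $\lambda^H_j$. The map $z \mapsto |z - (-1)|$ is a convex function on $\mathbb{C}$, and the maximum of a convex function over a compact convex set (here, the segment) is attained at an extreme point. Since the $\lambda^A_i$ take discrete values in $(-1, 1]$ and the extreme points of the segment correspond to the extremal values $\eigAone$ and $\eigAn$ of this set, for every fixed $j$ the best choice of $i$ is either $1$ or $n$. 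Optimizing then over $j$ yields that every dominating eigenvalue has the form $(1 + \lambda^H_{j^*} \eigAn)$ or $(1 + \lambda^H_{j^*} \eigAone)$ for some index $j^*$, which is exactly the claim.

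The step I expect to require the most care is the convexity/extreme-point argument when $\lambda^H_j$ is complex: I want to make sure the segment interpretation is correct regardless of whether $\rmH$ has real or complex spectrum. Because $\lambda^A_i$ is always real, the parameterization $t \mapsto t \lambda^H_j$ for $t \in [\eigAone, \eigAn]$ traces an honest (real-parameterized) line segment in $\mathbb{C}$, and the distance-to-$-1$ functional $t \mapsto |1 + t \lambda^H_j|$ is convex in $t$, so its maximum over $[\eigAone,\eigAn]$ is attained at one of the two endpoints. Equality of the two endpoints handles the tie case naturally, consistent with Definition~\ref{def:dominating_eigenvalue} which allows multiple dominating indices.

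Finally, I would remark that the precise identity of $j^*$ and the selection between cases (1) and (2) depends on the magnitudes and phases of $\lambda^H_j$, as spelled out exhaustively in Lemma~\ref{lemma:all_cases}; the theorem itself only asserts the coarser dichotomy, which is a direct consequence of the extreme-point argument above. No further calculations beyond Lemma~\ref{lem:eigenvalues}, Proposition~\ref{prop:A}, and the convexity of $|\cdot|$ are needed.
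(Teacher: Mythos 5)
Your proposal is correct, and it takes a genuinely different and cleaner route than the paper. The paper proves this theorem via an auxiliary lemma (Lemma~\ref{lemma:all_cases} in the appendix) that exhaustively enumerates cases according to the sign of $\eigAone$ and the phase $\varphi^H_r$, arguing each case separately by reasoning about how multiplication by $\lambda^A_i$ shrinks or flips $\lambda^H_j$ relative to $-1$; the theorem is then extracted as an immediate consequence. You instead fix $j$, observe that because every $\lambda^A_i$ is real in $(-1,1]$ the points $\lambda^H_j\lambda^A_i$ are carried by the real parameter $t \mapsto 1 + t\lambda^H_j$ over $t \in [\eigAone,\eigAn]$, and invoke the convexity of $t \mapsto |1 + t\lambda^H_j|$ to conclude the maximum over the interval sits at an endpoint; since both endpoints are themselves eigenvalues, the discrete maximum sits there too. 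This single extreme-point argument subsumes all of the paper's case distinctions at once and isolates exactly why only $\eigAone$ and $\eigAn$ can matter. What the paper's longer lemma buys, and your version deliberately forgoes, is the identity of the winning $j^*$ in each regime (e.g., $\lambda^H_{\min}$, $\lambda^H_k$, or $\lambda^H_r$), which the paper uses later when discussing Corollary~\ref{coro:sharpening}; your proof establishes exactly the dichotomy the theorem asserts and no more. One small imprecision: you describe the segment from $\lambda^H_j\eigAone$ to $\lambda^H_j\eigAn$ as passing through the origin, which holds only when $\eigAone < 0$; when all $\lambda^A_i > 0$ the segment lies on the ray through $\lambda^H_j$ but does not contain $0$. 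This is cosmetic and does not affect the convexity argument.
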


The proof follows immediately from Lemma~\ref{lemma:all_cases}.

\begin{theorem} 
Given the Transformer update in  eq.~(\ref{eq:vec_update}), if a single eigenvalue dominates, as the number of total layers $\ell \rightarrow \infty$, the feature representation $\rmX_\ell$ converges to one of two representations:
\textbf{(1)} If $(1 + \lambda^H_j \eigAn)$ dominates then,
\begin{align}
\rmX_\ell \rightarrow (1 + \lambda^H_j \eigAn)^\ell s_{j,n} \eigVecAn {\rvv_j^H}^\top,
\label{eq:oversmoothing_thm}
\end{align}
\textbf{(2)} If $(1 +  \lambda^H_j \eigAone)$ dominates then, 
\begin{align}
\rmX_\ell \rightarrow (1 + \lambda^H_j \eigAone)^\ell s_{j,1} \eigVecAone{\rvv_j^H}^\top
\label{eq:sharpening_thm}
\end{align}
where $\rvv^H,\rvv^A$ are eigenvalues of $\rmH,\rmA$ and $s_{j,i} := \langle {\rvv^Q}^{-1}_{j,i}, \mathsf{vec}(\rmX) \rangle$ and  ${\rvv^Q}^{-1}_{j,i}$ is row $ji$ in the matrix $\rmQ^{-1}$ (here $\rmQ$ is the matrix of eigenvectors of $(\rmI + \rmH \otimes \rmA)$). \textbf{(3)} If multiple eigenvalues have the same dominating magnitude, $\rmX_\ell$ converges to the sum of the dominating terms.
\end{theorem}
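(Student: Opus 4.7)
The plan is to diagonalize $(\rmI + \rmH \otimes \rmA)$ via its Kronecker structure, expand $\mathsf{vec}(\rmX_0)$ in the resulting eigenbasis, and identify what survives after raising the operator to the $\ell$-th power. Using Lemma~\ref{lem:eigenvalues} together with the standard identity that if $\rmA\rvv^A_i = \lambda^A_i \rvv^A_i$ and $\rmH\rvv^H_j = \lambda^H_j \rvv^H_j$ then $(\rmI + \rmH \otimes \rmA)(\rvv^H_j \otimes \rvv^A_i) = (1 + \lambda^H_j \lambda^A_i)(\rvv^H_j \otimes \rvv^A_i)$, the columns of $\rmQ$ can be taken to be the Kronecker products $\{\rvv^H_j \otimes \rvv^A_i\}$ (both $\rmA$ and $\rmH$ are assumed diagonalizable). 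Writing $\mathsf{vec}(\rmX_0) = \rmQ\, \rmQ^{-1}\, \mathsf{vec}(\rmX_0)$ expresses it as a linear combination of these eigenvectors with coefficients exactly $s_{j,i} = \langle{\rvv^Q}^{-1}_{j,i}, \mathsf{vec}(\rmX_0)\rangle$ as in the statement.

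Next, since $(\rmI + \rmH \otimes \rmA)^\ell$ acts diagonally in this basis, I would write
\begin{equation*}
\mathsf{vec}(\rmX_\ell) = \sum_{j,i} (1 + \lambda^H_j \lambda^A_i)^\ell\, s_{j,i}\, (\rvv^H_j \otimes \rvv^A_i).
\end{equation*}
Factoring out the dominating eigenvalue $(1 + \lambda^H_{j^*} \lambda^A_{i^*})^\ell$, by Definition~\ref{def:dominating_eigenvalue} every non-dominating term has magnitude ratio strictly less than $1$ and so vanishes as $\ell \to \infty$; only the dominating term survives (up to the common prefactor).

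To convert back to matrix form I would apply the identity $\mathsf{vec}(\rvu \rvw^\top) = \rvw \otimes \rvu$, giving $\rvv^H_{j^*} \otimes \rvv^A_{i^*} = \mathsf{vec}\!\left(\rvv^A_{i^*}(\rvv^H_{j^*})^\top\right)$. Unvectorizing then yields $\rmX_\ell \to (1 + \lambda^H_{j^*} \lambda^A_{i^*})^\ell\, s_{j^*,i^*}\, \rvv^A_{i^*}(\rvv^H_{j^*})^\top$. For case \textbf{(1)}, Theorem~\ref{thm:eigenvalues_full_update} gives $i^* = n$ and Proposition~\ref{prop:A} identifies $\rvv^A_n = \boldsymbol{1}$, producing eq.~(\ref{eq:oversmoothing}); for case \textbf{(2)}, $i^* = 1$ gives eq.~(\ref{eq:sharpening}) directly.

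For case \textbf{(3)} the identical factoring works, but several indices $(j^*, i^*)$ attain the dominating magnitude simultaneously. Their ratios to the factored-out magnitude remain of unit modulus (possibly with nontrivial phases), so none of these terms vanish, while all other terms still vanish by the same argument. Summing the surviving contributions yields the claimed sum-of-dominating-terms representation. The main obstacle is the tacit diagonalizability of $(\rmI + \rmH \otimes \rmA)$: this holds whenever $\rmH$ is diagonalizable, since the Kronecker product of diagonalizable matrices is diagonalizable and $\rmA$ is diagonalizable by Assumption~\ref{assume:A}. If $\rmH$ were defective, the same argument could be run in Jordan form; Jordan blocks contribute only polynomial-in-$\ell$ factors that are dominated by the strict exponential separation between the dominating and non-dominating magnitudes, so the limiting rank-one structure is unchanged.
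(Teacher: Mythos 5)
Your proof is correct and takes essentially the same route as the paper's: diagonalize $(\rmI + \rmH \otimes \rmA)$ through its Kronecker eigenstructure, expand $\mathsf{vec}(\rmX_0)$ in the eigenbasis $\{\rvv^H_j \otimes \rvv^A_i\}$ with coefficients $s_{j,i}$, observe that the dominating term(s) survive as $\ell \to \infty$, and unvectorize via $\rvv^H_j \otimes \rvv^A_i = \mathsf{vec}(\rvv^A_i (\rvv^H_j)^\top)$. One thing you add that the paper leaves tacit is the diagonalizability of $\rmH$ (Assumption~\ref{assume:A} covers only $\rmA$); you are right that the expansion requires it, and your Jordan-form remark correctly handles the non-dominating blocks, though if the dominating eigenvalue itself sits in a nontrivial Jordan block the polynomial factor would alter which generalized-eigenvector direction survives, so the clean rank-one limit in cases (1) and (2) really does need the dominating eigenvalue to be semisimple.
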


\begin{proof}
Recall that the eigenvalues and eigenvectors of $(\rmI + \rmH \otimes \rmA)$ are equal to $(1 + \lambda^H_j\lambda^A_i)$ and  $\rvv^H_j \otimes \rvv^A_i$ for all $j \in \{1, ..., d\}$ and $i \in \{1, \ldots, n\}$. This means,
\begin{align*}
    \mathsf{vec}(\rmX_\ell) = \sum_{i,j} (1 + \lambda^H_j\lambda^A_i)^\ell \langle{\rvv^Q}^{-1}_{j,i},  \mathsf{vec}(\rmX)\rangle (\rvv^H_j \otimes \rvv^A_i). 
\end{align*}
Recall that ${\rvv^Q}^{-1}_{j,i}$ is row $ji$ in the matrix $\rmQ^{-1}$, where $\rmQ$ is the matrix of eigenvectors $\rvv^H_j \otimes \rvv^A_i$. Further recall that ${\textcolor{Rufous}{\rvv^A_i}} = \eigVecAn$. As described in Theorem~\ref{thm:eigenvalues_full_update}, as $\ell \rightarrow \infty$ at least one of the eigenvalues pairs $\lambda^H_j\lambda^A_i$ will dominate the expression $(1 + \lambda^H_j\lambda^A_i)^\ell$, which causes $\mathsf{vec}(\rmX_L)$ to converge to the dominating term. Finally, we can rewrite, $\rvv_1 \otimes \rvv_2$ as $\mathsf{vec}(\rvv_2\rvv_1^\top)$. Now all non-scalar terms have $\mathsf{vec}(\cdot)$ applied, so we can remove this function everywhere to give the matrix form given in eq.~(\ref{eq:oversmoothing_thm}) and eq.~(\ref{eq:sharpening_thm}).
\end{proof}

\begin{corollary} 
If the residual connection is removed in the Transformer update, then the eigenvalues are of the form $(\lambda^H_{j}\lambda^A_{i})$. Further, $(\lambda^H_{j^*}\eigAn)$ is always a dominating eigenvalue, and $\rmX_\ell \rightarrow  (\lambda^H_{j^*} \eigAn)^\ell s_{j,n} \eigVecAn {\overline{\rvv}_{j^*}^H}^\top$ as $\ell \rightarrow \infty$, where ${\overline{\rvv}_{j^*}^H}$ is the sum of all eigenvectors with eigenvalue equal to the dominating eigenvalue $\lambda^H_{j^*}$.
\end{corollary}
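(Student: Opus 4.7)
The plan is to specialize the argument underlying Theorem~\ref{thm:representation} to the pure product $(\rmH \otimes \rmA)^\ell$ rather than $(\rmI + \rmH \otimes \rmA)^\ell$, exploiting the fact that, by Proposition~\ref{prop:A}, $\eigAn = 1$ is strictly larger in magnitude than every other eigenvalue of $\rmA$. First, I would observe that dropping the residual in eq.~(\ref{eq:update}) gives $\rmX_\ell = \rmA \rmX_{\ell-1} \rmW_{V} \rmW_{\mathsf{proj}}$, so vectorization yields $\mathsf{vec}(\rmX_\ell) = (\rmH \otimes \rmA)^\ell \mathsf{vec}(\rmX_0)$. The Kronecker spectrum identity invoked in Lemma~\ref{lem:eigenvalues} immediately shows that the eigenpairs of $\rmH \otimes \rmA$ are $(\lambda^H_j \lambda^A_i,\, \rvv^H_j \otimes \rvv^A_i)$, which is the first claim of the corollary.

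Next, I would establish dominance of $\lambda^H_{j^*} \eigAn$. By Proposition~\ref{prop:A}, $|\lambda^A_i| < 1$ for every $i < n$, while $\eigAn = 1$. Hence for any pair $(j,i)$ with $i \neq n$ we have $|\lambda^H_j \lambda^A_i| < |\lambda^H_j| \leq |\lambda^H_{j^*}| = |\lambda^H_{j^*} \eigAn|$, so the maximum-magnitude product must involve $i = n$, and among those is achieved at $j = j^*$. Note this is simpler than Theorem~\ref{thm:eigenvalues_full_update}: the $+1$ shift in $(1+\lambda^H_j\lambda^A_i)$ previously allowed the `sharpening' branch via $\eigAone$ to dominate, but without that shift the Perron eigenvalue of $\rmA$ always wins.

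Finally, to derive the limit I would expand $\mathsf{vec}(\rmX_0)$ in the eigenbasis of $\rmH \otimes \rmA$ and raise to the $\ell$-th power exactly as in the proof of Theorem~\ref{thm:representation}, so that only the dominating block survives after dividing by $(\lambda^H_{j^*}\eigAn)^\ell$. Because every dominating pair forces $i = n$, the $\rmA$-factor in each surviving eigenvector equals $\rvv^A_n = \eigVecAn$, and the residual contribution in the $\rmH$-direction is a weighted combination of eigenvectors of $\rmH$ whose eigenvalue equals $\lambda^H_{j^*}$. These can be bundled into a single vector $\overline{\rvv}_{j^*}^H$ since the $\lambda^H_{j^*}$-eigenspace of $\rmH$ is linear; un-vectorizing via $\mathsf{vec}(\boldsymbol{1} (\overline{\rvv}_{j^*}^H)^\top) = \overline{\rvv}_{j^*}^H \otimes \boldsymbol{1}$ then produces the stated matrix form.

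The main obstacle I expect is precisely articulating the aggregation into $\overline{\rvv}_{j^*}^H$. Two subtleties need care: (i) ruling out contributions from $\rmH$-eigenvalues that share the modulus of $\lambda^H_{j^*}$ but differ in value (e.g., $-\lambda^H_{j^*}$), since the statement restricts the sum to eigenvectors with eigenvalue exactly $\lambda^H_{j^*}$ — this requires either assuming the $\rmH$-eigenvalue of largest magnitude is unique up to multiplicity, or invoking the $i = n$ uniqueness so that modulus ties on the $\rmA$ side are absent; and (ii) tracking how the change-of-basis coefficients $s_{j,n}$ absorb into a single leading scalar $s_{j,n}$ consistent with the notation of Theorem~\ref{thm:representation} once the coefficient-weighted combination is identified with $\overline{\rvv}_{j^*}^H$.
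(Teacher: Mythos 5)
Your proposal follows the same route as the paper: vectorize the no-residual update to $\mathsf{vec}(\rmX_\ell)=(\rmH\otimes\rmA)^\ell\mathsf{vec}(\rmX_0)$, read off the Kronecker spectrum via Lemma~\ref{lem:eigenvalues}, use $\eigAn=1>|\lambda^A_i|$ for all $i<n$ (Proposition~\ref{prop:A}) to pin the dominating pair to $i=n$, and discard the subdominant terms of the eigenbasis expansion. The two subtleties you flag --- possible modulus ties among $\rmH$-eigenvalues distinct from $\lambda^H_{j^*}$ (e.g.\ $-\lambda^H_{j^*}$ or complex pairs of the same modulus), and the fact that the per-eigenvector coefficients $s_{k,n}$ cannot in general be pulled out as a single scalar multiplying $\overline{\rvv}_{j^*}^H$ --- are genuine, and the paper's one-line proof silently glosses over both, so your treatment is in fact more careful than the source.
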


\begin{proof}
The eigendecomposition of the Transformer update without the residual connection is:
\begin{align*}
    \mathsf{vec}(\rmX_\ell) = \sum_{i,j} (\lambda^H_j\lambda^A_i)^\ell \langle{\rvv^Q}^{-1}_{j,i},  \mathsf{vec}(\rmX)\rangle (\rvv^H_j \otimes \rvv^A_i). 
\end{align*}
In this case, $(\lambda^H_{j^*}\eigAn)$ is always a dominating eigenvalue because $|\eigAn| > |\lambda^A_i|$ for any $i \in \{1, \ldots, n-1\}$. This and the above eigendecomposition yields $\rmX_\ell \rightarrow  (\lambda^H_{j^*} \eigAn)^\ell s_{j,n} \eigVecAn {\overline{\rvv}_{j^*}^H}^\top$ as $\ell \rightarrow \infty$.
\end{proof}

\begin{theorem} 
Given the Transformer update  eq.~(\ref{eq:vec_update}), as the number of total layers $\ell \rightarrow \infty$, if \textbf{(1)} one eigenvalue $(1 + \lambda^H_j \eigAn)$ dominates, we have input convergence, angle convergence, and rank collapse. If \textbf{(2)} one eigenvalue $(1 + \lambda^H_j \eigAone)$ dominates, we do not have input convergence or angle convergence, but we do have rank collapse. If \textbf{(3)} multiple eigenvalues have the same dominating magnitude and: (a) there is at least one dominating eigenvalue $(1 + \lambda^H_{j^*} \lambda^A_{i^*})$ where $\lambda^A_{i^*} \neq \eigAn$, then we do not have input convergence or angle convergence, or (b) the geometric multiplicity of $\eigAone$ and $\lambda^H_{j^*}$ are both greater than 1, then we also do not have rank collapse.
\end{theorem}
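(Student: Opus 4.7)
The plan is to feed each case of the theorem into the explicit asymptotic forms produced by Theorem~\ref{thm:representation} (and Corollary~\ref{coro:no-residual-features} where needed), then verify Definitions~\ref{def:low-pass}--\ref{def:rank_collapse} directly against those forms.

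\textbf{Case (1).} Theorem~\ref{thm:representation} gives $\rmX_\ell \to c_\ell\, \mathbf{1}\, (\rvv_j^H)^\top$ for some scalar sequence $c_\ell$. Since every row of the limit equals the same vector, $(\rmI-(1/n)\mathbf{1}\mathbf{1}^\top)\rmX_\ell \to 0$ while the LFC stays nonzero, giving input convergence; all pairwise cosines equal $+1$, giving angle convergence; and the limit has a single nonzero singular value, so the effective rank collapses to $1$.

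\textbf{Case (2).} Here $\rmX_\ell \to c_\ell\, \rvv^A_1 (\rvv_j^H)^\top$ is still rank one, so rank collapse holds. To falsify input and angle convergence, the key fact is that $\mathbf{1}$ is the unique Perron eigenvector of $\rmA$ (Proposition~\ref{prop:A}); applying Perron--Frobenius to $\rmA^\top$ yields a strictly positive left Perron eigenvector $\rvu$, and orthogonality $\rvu^\top \rvv^A_1 = 0$ forces $\rvv^A_1$ to have components of mixed sign. Rows of the limit then cannot all coincide, so input convergence fails; and the pairwise cosine between rows $i$ and $i'$ equals $\mathrm{sign}((\rvv^A_1)_i (\rvv^A_1)_{i'}) \in \{-1,+1\}$ with both signs present, so the average in Definition~\ref{def:cosine_similarity} is strictly less than $1$.

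\textbf{Case (3).} Part (3) of Theorem~\ref{thm:representation} writes $\rmX_\ell$ as a sum of dominating rank-one outer products $\rvv^A_{i_k}(\rvv^H_{j_k})^\top$ weighted by scalars $s_{j_k,i_k}$. For (3a), the presence of at least one term with $\rvv^A_{i^*}\neq \mathbf{1}$, together with linear independence of the eigenvectors of the diagonalizable $\rmA$, means the columns of the limit cannot all lie in $\mathrm{span}(\mathbf{1})$ (except on a measure-zero set of initial data), so HFC does not vanish; the same sign argument as in Case (2) then kills angle convergence. For (3b), when the geometric multiplicities of \eigAone in $\rmA$ and $\lambda^H_{j^*}$ in $\rmH$ are each at least $2$, the dominating sum contains two linearly independent outer products of the form $\rvv^A (\rvv^H)^\top$ and $\rvv^{A\prime} (\rvv^{H\prime})^\top$, whose sum has rank at least $2$, keeping the effective rank strictly above $1$.

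\textbf{Main obstacle.} The delicate step is ruling out angle convergence in Cases (2) and (3a): knowing $\rvv^A_i \neq \mathbf{1}$ is not enough, since one must still exclude the degenerate possibility that all components of $\rvv^A_i$ share a sign. The cleanest route is to invoke Perron--Frobenius on both $\rmA$ and $\rmA^\top$ and use orthogonality to the positive left Perron eigenvector to force sign changes. Everything else reduces to reading off the low-rank structure already supplied by Theorem~\ref{thm:representation}.
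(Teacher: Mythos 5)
Your proposal is correct and follows essentially the same route as the paper: read the limiting forms of $\rmX_\ell$ off Theorem~\ref{thm:representation}, then check each of Definitions~\ref{def:low-pass}--\ref{def:rank_collapse} directly against those rank-one (or low-rank) outer products. Cases (1) and (3b) match the paper's argument almost verbatim.

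Where you genuinely improve on the paper is in Case (2) (and hence (3a)). The paper disposes of angle convergence by observing that $\rvv^A_1$ ``can contain both positive and negative components,'' which merely asserts a possibility. You actually prove it must: $\rmA^\top$ is also positive and right-stochastic up to scaling, so Perron--Frobenius gives a strictly positive left Perron eigenvector $\rvu$ of $\rmA$ with eigenvalue $1$; since $\lambda^A_1 \neq 1$, the standard biorthogonality relation $\rvu^\top \rmA \rvv^A_1 = \lambda^A_1\, \rvu^\top \rvv^A_1 = \rvu^\top \rvv^A_1$ forces $\rvu^\top \rvv^A_1 = 0$, which with $\rvu > 0$ forces $\rvv^A_1$ to have entries of both signs. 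That closes the step the paper leaves as an assertion, and it immediately yields the pairwise cosines $\mathrm{sign}\bigl((\rvv^A_1)_i(\rvv^A_1)_{i'}\bigr)$ with both signs present, so the average is strictly below $1$. Two minor remarks: (i) your ``measure-zero set of initial data'' caveat in (3a) is in fact the honest reading of what the paper's terse ``as shown for case (2)'' must mean --- since the definitions are quantified over all $\rmX$, a single generic $\rmX$ violating them suffices, but the coefficient $s_{j^*,i^*}$ can vanish for special initial data --- so your phrasing is an improvement rather than a weakness; (ii) both you and the paper implicitly assume real eigenvectors $\rvv^H_j$ and nonzero entries of $\rvv^A_1$ when computing cosines, which is fine at this level of rigor but worth flagging.
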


\begin{proof}
If \textbf{(1)} one eigenvalue $(1 + \lambda^H_j \eigAn)$ dominates then we have that $\rmX_\ell \rightarrow (1 + \lambda^H_j \eigAn)^\ell s_{j,n} \eigVecAn {\rvv_j^H}^\top$. Therefore, $\rmX_\ell$ has all the same inputs which also implies angle convergence and rank collapse. If \textbf{(2)} one eigenvalue $(1 + \lambda^H_j \eigAone)$ dominates then we have that $\rmX_\ell \rightarrow (1 + \lambda^H_j \eigAone)^\ell s_{j,1} \eigVecAone{\rvv_j^H}^\top$. Therefore, we do not have input convergence. Further as $\eigVecAone$ can contain both positive an negative components we do not have angle convergence. However, $\rmX_\ell$ is rank one so we do have rank collapse. If \textbf{(3)} multiple eigenvalues have the same dominating magnitude and: (a) there is at least one dominating eigenvalue $(1 + \lambda^H_{j^*} \lambda^A_{i^*})$ where $\lambda^A_{i^*} \neq \eigAn$ then we do not have input convergence or rank convergence, as shown for case (2); if (b) the geometric multiplicity of $\eigAone$ and $\lambda^H_{j^*}$ are both greater than 1, then $\rmX_\ell$ converges to the sum of at least 2 rank-1 matrices which are not themselves linear combinations of each other. Therefore, $\mathsf{rank}(\rmX_\ell) \geq 2$.
\end{proof}

\begin{corollary}
\label{coro:no-residual-input-convergence}
If the residual connection is removed in the Transformer update, input convergence, angle convergence, and rank collapse are guaranteed.
\end{corollary}

\begin{proof}
Corollary~\ref{coro:no-residual-features} tells us that in this case $\rmX_\ell \rightarrow  (\lambda^H_{j^*} \eigAn)^\ell s_{j,n} \eigVecAn {\overline{\rvv}_{j^*}^H}^\top$ as $\ell \rightarrow \infty$, where ${\overline{\rvv}_{j^*}^H}$ is the sum of all eigenvectors with eigenvalue equal to the dominating eigenvalue $\lambda^H_{j^*}$. This matrix has all the same features and so we have input convergence, angle convergence, and rank collapse.
\end{proof}

\begin{corollary} 
If the eigenvalues of $\rmH$ fall within $[-1,0)$, then $(1 + \lambda^H_{j^*} \eigAone)$ dominates. If the eigenvalues of $\rmH$ fall within $(0,\infty)$, then $(1 + \lambda^H_{j^*} \eigAn)$ dominates. 
\end{corollary}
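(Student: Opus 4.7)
The plan is to apply Lemma~\ref{lem:eigenvalues} to reduce the question to understanding which pair $(i,j)$ maximizes $|1 + \lambda^H_j \lambda^A_i|$, and then to analyze this scalar quantity directly using the bounds on $\lambda^A_i$ from Proposition~\ref{prop:A}. Throughout I use that $\lambda^A_i \in (-1,1]$ with $\eigAn = 1$, and that the eigenvalues of $\rmH$ are (by hypothesis) real, lying either in $[-1,0)$ or in $(0,\infty)$. Since this is a version of what Lemma~\ref{lemma:all_cases} does in general, the corollary essentially amounts to collapsing the case split of that lemma in the two prescribed regimes.

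For the first claim, assume every $\lambda^H_j \in [-1,0)$. Then for any pair $(i,j)$ we have $\lambda^H_j \lambda^A_i \in (-1,1)$, so $1 + \lambda^H_j \lambda^A_i \in (0,2)$ is strictly positive. Maximizing $|1 + \lambda^H_j \lambda^A_i|$ therefore reduces to maximizing $\lambda^H_j \lambda^A_i$. Fixing $\lambda^H_j < 0$, the product is maximized by choosing $\lambda^A_i$ as negative as possible, i.e.\ $\lambda^A_i = \eigAone$ (this covers both $\eigAone \geq 0$ and $\eigAone < 0$: in the former case all products are nonpositive and the least-negative product is attained at the smallest $\lambda^A_i$; in the latter the product is positive and maximized at the most negative $\lambda^A_i$). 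Thus, regardless of the sign of $\eigAone$, the dominating eigenvalue has the form $(1 + \lambda^H_{j^*}\eigAone)$ for some $j^*$.

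For the second claim, assume every $\lambda^H_j \in (0,\infty)$. I will show directly that for fixed $j$ the choice $\lambda^A_i = \eigAn = 1$ maximizes $|1 + \lambda^H_j \lambda^A_i|$ over $i$. If $\lambda^A_i \in [0,1]$ then $1 + \lambda^H_j \lambda^A_i \geq 0$ and is clearly maximized at $\lambda^A_i = 1$, giving $1 + \lambda^H_j$. If instead $\lambda^A_i \in (-1,0)$, write $t = |\lambda^A_i| \in (0,1)$; then $|1 + \lambda^H_j \lambda^A_i| = |1 - \lambda^H_j t| \leq \max(1, \lambda^H_j t - 1)$, and comparing with $1 + \lambda^H_j$ we have $(1+\lambda^H_j) - (\lambda^H_j t - 1) = 2 + \lambda^H_j(1-t) > 0$ since $t < 1$ and $\lambda^H_j > 0$. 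Hence $\lambda^A_i = \eigAn$ strictly dominates over $i$, and the dominating eigenvalue takes the form $(1 + \lambda^H_{j^*}\eigAn)$.

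The only mild obstacle is making sure the case split on the sign of $\eigAone$ in the first part is handled without an extra case for $\eigAone = 0$; this is absorbed by the convention that the maximum over $\lambda^A_i$ is attained at $\eigAone$ in both subcases, which can be checked by a single monotonicity argument in $\lambda^A_i$ once $\lambda^H_j < 0$ is fixed. Beyond that the proof is elementary, needing only the triangle inequality and the fact from Proposition~\ref{prop:A} that $|\lambda^A_i| < 1$ whenever $\lambda^A_i \neq \eigAn$.
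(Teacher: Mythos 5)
Your proof is correct and takes essentially the same route as the paper's: both reduce the question to identifying which pair $(i,j)$ maximizes $|1+\lambda^H_j\lambda^A_i|$, and both conclude that the optimizer must use $\eigAone$ when all $\lambda^H_j<0$ and $\eigAn$ when all $\lambda^H_j>0$; your version replaces the paper's ``shrink/flip relative to $-1$'' geometric case split with a single monotonicity-in-$\lambda^A_i$ observation, which is a touch cleaner but not a different method. One small inaccuracy: with $\lambda^H_j\in[-1,0)$ and $\lambda^A_i\in(-1,1]$ the product lies in $[-1,1)$, not $(-1,1)$ (take $\lambda^H_j=-1$, $\lambda^A_i=1$), so $1+\lambda^H_j\lambda^A_i$ is nonnegative rather than strictly positive; this does not affect the conclusion since maximizing $|1+\lambda^H_j\lambda^A_i|$ still reduces to maximizing $\lambda^H_j\lambda^A_i$.
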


\begin{proof}
Let $\lambda^H_1 \leq \cdots \leq \lambda^H_r$. Again we can think of selecting $\lambda^H_j \lambda^A_i$ that maximizes $|1 + \lambda^H_j \lambda^A_i|$ as maximizing the distance of $\lambda^H_j \lambda^A_i$ to $-1$. Consider the first case where $\lambda^H_1, \cdots,  \lambda^H_r \in [-1,0)$, and so $\lambda^H_1$ is the closest eigenvalue to $-1$ and $\lambda^H_r$ is the farthest. If $\eigAone > 0$ then all $\lambda^A$ can do is shrink $\lambda^H$ to the origin, where $\eigAone$ shrinks $\lambda^H$ the most. The closest eigenvalue to the origin is $\lambda^H_r$, and so $(1 + \lambda^H_r \eigAone)$ dominates.
If instead $\eigAone < 0$, then we can `flip' $\lambda^H_j$ over the origin, making it farther from $-1$ than all other $\lambda^H_{j'}$. The eigenvalue that we can `flip' the farthest from $-1$ is $\lambda^H_{1}$, and so $(1 + \lambda^H_1 \eigAone)$ dominates. If all eigenvalues of $\rmH$ are equal, then both $(1 + \lambda^H_r \eigAone)$ and $(1 + \lambda^H_1 \eigAone)$ dominate.
For the second case where $\lambda^H_1, \cdots,  \lambda^H_r \in (0,\infty)$, we have that $|1 + \lambda^H_{r}\eigAn| > |1 + \lambda^H_{j'}\lambda^A_{i'}|$ for all $j' \in \{1, \ldots, d-1\}$ and $i' \in \{1, \ldots, n-1\}$. This is because, by definition $\lambda^H_{r}\eigAn > \lambda^H_{j'}\lambda^A_{i'}$. Further, $1 + \lambda^H_{r}\eigAn \geq |1 + \lambda^H_{j'}\lambda^A_{i'}|$ as the largest $|1 + \lambda^H_{j'}\lambda^A_{i'}|$ can be is either (i) $|1 - \epsilon\lambda^H_{r}|$ for $0 < \epsilon < 1$ or (ii) $|1 + \lambda^H_{r-1}\eigAn|$ (i.e., in (i) $\lambda^H_{r}$ is negated by $\eigAone$ and in (ii) $\lambda^H_{r-1}$ is the next largest value of $\lambda^H$). For (i), it must be that $1 + \lambda^H_{r}\eigAn \geq |1 - \epsilon\lambda^H_{r}|$ as $\lambda^H_{r} > 0$. For (ii) $\lambda^H_{r} \geq \lambda^H_{r-1} > 0$, and so $|1 + \lambda^H_{r}\eigAn| \geq |1 + \lambda^H_{r-1}\eigAn|$. Therefore $\eigAn$ dominates.
\end{proof}

\section{Training \& Architecture Details}

Crucially, even though our theoretical analysis applies for fixed attention $\rmA$ and weights $\rmH$, \textbf{we use existing model architectures throughout}, i.e., including different attention/weights each layer, multi-head attention, layer normalization (arranged in the pre-LN format \cite{xiong2020layer}), and fully-connected layers.\footnote{If a model has multiple heads we will define $\rmW_V = \rmV_H$ and $\rmW_{\mathsf{proj}} = \Lambda_H \rmV_H^\top$).} 


\paragraph{Image Classification: Training \& Architecture Details.}
We base our image classification experiments on the ViT model \cite{dosovitskiy2020image} and training recipe introduced in \cite{touvron2021training}. 
On CIFAR100 for 300 epochs using the cross-entropy loss and the AdamW optimizer \cite{adamw}. Our setup is the one used in \cite{park2022vision} which itself follows the DeiT training recipe \cite{touvron2021training}. We use a cosine annealing schedule with an initial learning rate of $1.25 \times 10^{-4}$ and weight decay of $5 \times 10^{-2}$. We use a batch size of 96. We use data augmentation including RandAugment \cite{cubuk2019randaugment}, CutMix \cite{yun2019cutmix}, Mixup \cite{zhang2018mixup}, and label smoothing \cite{touvron2021training}.
The models were trained on two Nvidia RTX 2080 Ti GPUs. 
On ImageNet, we use the original DeiT code and training recipe described above. Changes from CIFAR100 are that we use a batch size of 512 and train on a single Nvidia RTX 4090 GPU.

\paragraph{Text Generation: Training \& Architecture Details.}
We base our NLP experiments on \citet{geiping2023cramming}, using their code-base. Following this work we pre-train encoder-only `Crammed' Bert models with a maximum budget of 24 hours. We use a masked language modeling objective and train on the Pile dataset \cite{gao2020pile}. The batch size is 8192 and the sequence length is 128. We evaluate models on SuperGLUE \cite{wang2020superglue} after fine-tuning for each task. In order to ensure a fair comparison, all models are trained on a reference system with an RTX 4090 GPU. We use mixed precision training with bfloat16 as we found it to be the most stable \cite{kaddour2023train}.

\section{Distribution of the eigenvalues of $\rmH$ in trained models}

\begin{figure*}[h]
\centering
\includegraphics[width=0.8\textwidth]{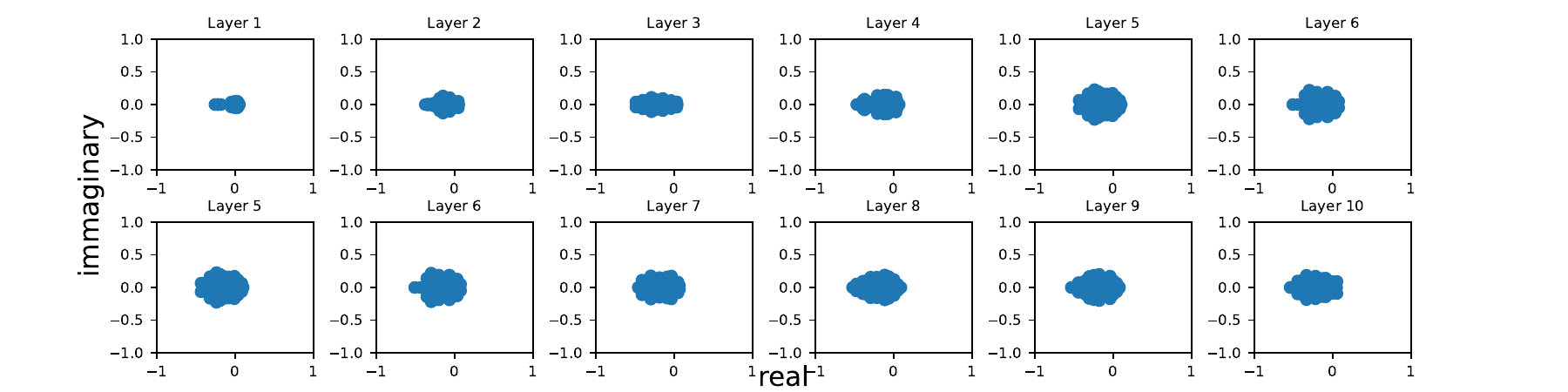}
\vspace{2ex}
\includegraphics[width=0.8\textwidth]{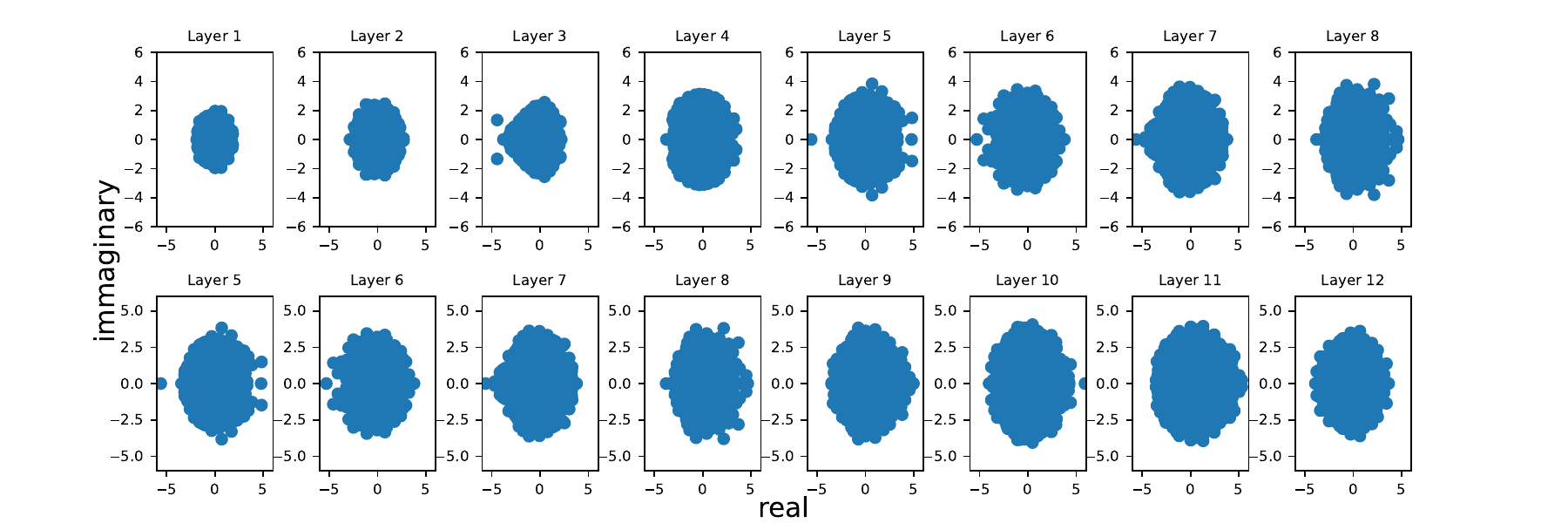}
\caption{\textbf{Distributions of eigenvalues of $\rmH$} (\emph{Top}) Vision models have distributions skewing to the negatives; (\emph{Bottom}) Language models have symmetrically distributed eigenvalues.}
\label{fig:eigen_distrib}
\end{figure*}

\end{document}